\theoremstyle{remark}
\newtheorem{remark}{Remark}
\theoremstyle{definition}
\newtheorem{definition}{Definition}
\theoremstyle{plain}
\newtheorem{proposition}{Proposition}
\newtheorem{lemma}{Lemma}
\newcolumntype{M}[1]{>{\centering\arraybackslash}m{#1}}
\newcommand{\Mid}{\;\|\;}
\newcommand{\tr}{\mathsf{T}}
\renewcommand{\P}{{\mathrm{P}}}
\newcommand{\A}{{\mathrm{A}}}
\pgfplotsset{compat=1.16}
\pgfplotsset{
    /pgfplots/xlabel near ticks/.style={
        /pgfplots/every axis x label/.style={
        at={(ticklabel cs:0.5)},anchor=near ticklabel
        }
    },
    /pgfplots/ylabel near ticks/.style={
        /pgfplots/every axis y label/.style={
        at={(ticklabel cs:0.5)},rotate=90,anchor=near ticklabel}
    }
}
\newcommand\ttiny{\fontsize{5pt}{5pt}\selectfont}
\pgfplotsset{every axis/.append style={
    label style={font=\scriptsize},
    tick label style={font=\ttiny},
}}
\definecolor{myred}{RGB}{255,31,91}
\definecolor{mygreen}{RGB}{0,205,108}
\definecolor{myblue}{RGB}{0,154,222}
\definecolor{mypurple}{RGB}{175,88,186}
\definecolor{myyellow}{RGB}{255,198,30}
\definecolor{myorange}{RGB}{242,133,34}
\definecolor{mygray}{RGB}{160,177,186}
\definecolor{mybrown}{RGB}{166,118,29}
\definecolor{myred2}{RGB}{233,0,45}
\definecolor{mygreen2}{RGB}{0,176,0}
\title{Physics-Integrated Variational Autoencoders for Robust and Interpretable Generative Modeling}
\author{%
  Naoya Takeishi, ~ Alexandros Kalousis \\
  University of Applied Sciences and Arts Western Switzerland (HES-SO) \\
  Geneva, Switzerland \\
  \texttt{\{naoya.takeishi,alexandros.kalousis\}@hesge.ch} \\
}
\begin{document}

\maketitle

\begin{abstract}
Integrating physics models within machine learning models holds considerable promise toward learning robust models with improved interpretability and abilities to extrapolate. In this work, we focus on the integration of incomplete physics models into deep generative models. In particular, we introduce an architecture of variational autoencoders (VAEs) in which a part of the latent space is grounded by physics. A key technical challenge is to strike a balance between the incomplete physics and trainable components such as neural networks for ensuring that the physics part is used in a meaningful manner. To this end, we propose a regularized learning method that controls the effect of the trainable components and preserves the semantics of the physics-based latent variables as intended. We not only demonstrate generative performance improvements over a set of synthetic and real-world datasets, but we also show that we learn robust models that can consistently extrapolate beyond the training distribution in a meaningful manner. Moreover, we show that we can control the generative process in an interpretable manner.
\end{abstract}

\section{Introduction}

Data-driven modeling is often opposed to theory-driven modeling, yet their integration has also been recognized as an important approach called \emph{gray-box} or \emph{hybrid} modeling.
In statistical machine learning, incorporation of mathematical models of physics (in a broad sense; including knowledge of biology, chemistry, economics, etc.) has also been attracting attention.
Gray-box / hybrid modeling in machine learning holds considerable promise toward learning robust models with improved abilities to extrapolate beyond the distributions that they have been exposed to during training.
Moreover, it can bring significant benefits in terms of model interpretability since parts of a model get semantically grounded to concrete domain knowledge.

A technical challenge in \emph{deep} gray-box modeling is to ensure an appropriate use of physics models. 
A careless design of models and learning can lead to an erratic behavior of the components meant to represent physics (e.g., with erroneous estimation of physics parameters), and eventually, the overall model just learns to ignore them.
This is particularly the case when we bring together simplified or imperfect physics models with highly expressive data-driven machine learning models such as deep neural networks.
Such cases call for principled methods for striking an appropriate balance between physics and data-driven models to prevent the detrimental effects during learning.

Integration of physics models into machine learning has been considered in various contexts (see, e.g., \citep{willardIntegratingPhysicsbasedModeling2020,vonruedenInformedMachineLearning2020} and our Section~\ref{related}), but most existing studies focus on prediction or forecasting tasks and are not directly applicable to other tasks.
More importantly, the careful orchestration of physics-based and data-driven components have not necessarily been considered.
A notable exception is \citet{leguenAugmentingPhysicalModels2021}, in which they proposed a method to regularize the action of trainable components of a hybrid model of differential equations.
Their method has been developed for dynamics forecasting with additive combinations of physics and trainable models, but application to other situations is not trivial.

In this work, we aim at the integration of incomplete physics models into deep generative models.
While we focus on variational autoencoders (VAEs, \citep{kingmaAutoencodingVariationalBayes2014,rezende_stochastic_2014}), our idea is applicable to other models in principle.
In our VAE, the decoder comprises physics-based models and trainable neural networks, and some of the latent variables are semantically grounded to the parameters of the physics models.
Such a VAE, if appropriately trained, is by construction partly interpretable.
Moreover, since it can by construction capture the underlying physics, it will be robust in out-of-distribution regime and exhibit meaningful extrapolation properties.
We propose a regularized learning framework for ensuring the meaningful use of the physics models and the preservation of the semantics of the latent variables in the physics-integrated VAEs.
We empirically demonstrate that our method can learn a model that exhibits better generalization, and more importantly, can extrapolate robustly in out-of-distribution regime.
In addition, we show how the direct access to the physics-grounded latent variables allows us to alter properties of generation meaningfully and explore counterfactual scenarios.

\section{Physics-integrated VAEs}
\label{model}

We first describe the structure of VAEs we consider, which comprise physics models and machine learning models such as neural nets.
We suppose that the physics models can be solved analytically or numerically with a reasonable cost, and the (approximate) solution is differentiable with regard to the quantities on which the solution depends.
This assumption holds in most physics models known in practice, which come in different forms such as algebraic and differential equations.
If there is no closed-form solution of algebraic equations, we can utilize differentiable optimizers \citep{amosOptNetDifferentiableOptimization2017} as a layer of the model.
For differential equations, differentiable integrators \citep[see, e.g.,][]{chenNeuralOrdinaryDifferential2018} will constitute a layer.
Handling non-differentiable and/or overly-complex simulators remains an important open challenge.


\subsection{Example}
\label{model:example}

We start with an example to demonstrate the main concepts.
Let us suppose that data comprise time-series of the angle of pendulums following an ordinary differential equation (ODE):
\begin{equation}\label{eq:pendulum}
    \underbrace{\mathrm{d}^2\vartheta(t)/\mathrm{d}t^2 + \omega^2 \sin\vartheta(t)}_{\text{given as prior knowledge,}~f_\P} + \underbrace{\xi \mathrm{d}\vartheta(t)/\mathrm{d}t - u(t)}_{\text{to be learned by NN,}~f_\A} = 0,
\end{equation}
where $\vartheta$ is a pendulum's angle, and $\omega$, $\xi$, and $u$ are the pendulum's angular velocity, damping coefficient, and external force, respectively.
We suppose that a data point $x$ is a sequence of $\vartheta(t)$, i.e., $x = [ \vartheta(0) \ \vartheta(\Delta t) \ \cdots \ \vartheta((\tau-1)\Delta t) ]^\tr \in \mathbb{R}^\tau$ for some $\Delta t \in \mathbb{R}$ and $\tau \in \mathbb{N}$, where $\vartheta(t)$ denotes the solution of \eqref{eq:pendulum} with a particular configuration of $\omega$, $\xi$, and $u$.
In this example, we learn a VAE on a dataset comprising such $x$'s with different configurations of $\omega$, $\xi$, and $u$.

Suppose that the first two terms of \eqref{eq:pendulum} are given as prior knowledge, i.e., we know that the governing equation should contain $f_\P(\vartheta,z_\P) \coloneqq \ddot\vartheta+z_\P^2\sin\vartheta$.
We will use such prior knowledge, $f_\P$, by incorporating it in the decoder of a VAE that we will learn.
Since $f_\P$ misses some effects of the true system \eqref{eq:pendulum}, we complete it by augmenting the decoder with a neural network $f_\A(\vartheta,\bm{z}_\A)$.
The VAE's latent variable will have two parts, $z_\P$ and $\bm{z}_\A$, respectively linked to $f_\P$ and $f_\A$.
On one hand, $\bm{z}_\A$ works as an ordinary VAE's latent variable since $f_\A$ is a neural net, and we suppose $\bm{z}_\A\in\mathbb{R}^d$, $p(\bm{z}_\A) \coloneqq \mathcal{N}(\bm{0},\bm{I})$.
On the other hand, we semantically ground $z_\P$ to a physics parameter; in this case, $z_\P\in\mathbb{R}$ should work as pendulum's $\omega$.
In summary, the augmented decoder here is $\mathbb{E}[x] = \operatorname{ODEsolve}_\vartheta \big[ f_\P(\vartheta(t), z_\P) + f_\A(\vartheta(t), \bm{z}_\A) = 0 \big]$, where $\operatorname{ODEsolve}_\vartheta$ denotes some differentiable solver of an ODE with regard to $\vartheta$.
The encoder will have corresponding recognition networks for $z_\P$ and $\bm{z}_\A$.
The situation in this example will be numerically examined in Section~\ref{expts:pendulum}.


\subsection{General formulation}
\label{model:general}


We now present the concept of our physics-integrated VAEs in a general form.
Note that our interest is not limited to the additive model combination nor ODEs.
In fact, the general formulation below subsumes non-additive augmentation of various physics models.
The notation introduced in this section will be used to explain the proposed regularized learning method later in Section~\ref{reg}.

For ease of discussion, we suppose that a VAE decoder comprises two parts: a physics-based model $f_\P$ and a trainable auxiliary function $f_\A$.
More general cases, for example with multiple trainable functions $f_{\A,1}, f_{\A,2}, \dots$ used in different ways, are handled in Appendix~A.


\subsubsection{Latent variables and priors}

We consider two types of latent variables, $\bm{z}_\P\in\mathcal{Z}_\P$ and $\bm{z}_\A\in\mathcal{Z}_\A$, which respectively will be used in $f_\P$ and $f_\A$.
The latent variables can be in any space, but for the sake of discussion, we suppose $\mathcal{Z}_\P$ and $\mathcal{Z}_\A$ are (subsets of) the Euclidean space and set their prior distribution as multivariate normal:
\begin{equation}\label{eq:prior}
    p(\bm{z}_\P) \coloneqq \mathcal{N}(\bm{z}_\P \mid \bm{m}_\P, v^2_\P \bm{I})
    \quad\text{and}\quad
    p(\bm{z}_\A) \coloneqq \mathcal{N}(\bm{z}_\A \mid \bm{0}, \bm{I}),
\end{equation}
where $\bm{m}_\P$ and $v^2_\P$ are defined in accordance with prior knowledge of $f_\P$'s parameters.
Note that $\bm{z}_\P$ will be directly interpretable as they will be semantically grounded to the parameters of the physics model $f_\P$; for example in Section~\ref{model:example}, $z_\P \coloneqq \omega$ was the angular velocity of a pendulum.


\subsubsection{Decoder}

The decoder of a physics-integrated VAE comprises two types of functions\footnote{The distinction between $f_\P$ and $f_\A$ depends on the origin of the functional forms (and not if trainable or not).
The form of $f_\P$ depends on physics' insight and thus fixed.
On the other hand, the form of $f_\A$ is determined only from utility as a function appoximator, and we can use whatever useful (e.g., feed-forward NNs, RNNs, etc.).}, $f_\P \colon \mathcal{Z}_\P \to \mathcal{Y}_\P$ and $f_\A \colon \mathcal{Y}_\P \times \mathcal{Z}_\A \to \mathcal{Y}_\A$.
For notational convenience, we consider a functional $\mathcal{F}$ that evaluates $f_\P$ and $f_\A$, solves an equation if any, and finally gives observation $\bm{x} \in \mathcal{X}$.
$\mathcal{X}$ may be the space of sequences, images, and so on.
Assuming Gaussian observation noise, we write the observation model as
\begin{equation}\label{eq:dec}
    p_\theta(\bm{x} \mid \bm{z}_\P, \bm{z}_\A) \coloneqq \mathcal{N} \big( \bm{x} \mid \mathcal{F} [ f_\A, f_\P; \bm{z}_\P, \bm{z}_\A ], \bm\Sigma_x \big),
\end{equation}
where $\bm{z}_A\in\mathcal{Z}_\A$ and $\bm{z}_\P\in\mathcal{Z}_\P$ are the arguments of $f_\A$ and $f_\P$, respectively.
Note that $f_\A$ and $f_\P$ may have other arguments besides $\bm{z}_\A$ and $\bm{z}_\P$, respectively, but they are omitted for simplicity.
We denote the set of trainable parameters of $f_\A$ and $f_\P$ (and $\bm\Sigma_x$) by $\theta$, while $f_\P$ may have no trainable global parameters other than $\bm{z}_\P$.

Let us see the semantics of the functional\footnote{It is natural to consider that $\mathcal{F}$ is a functional (and not a function) because we may need the access to the functions $f_\A$ and $f_\P$ themselves, rather than their pointwise values. For example, we need the full access to those functions when the decoder has an ODE solver with arbitrary initial condition.} $\mathcal{F}$ first in the light of the example of Section~\ref{model:example}.
Recall that there we considered the additive augmentation of ODE (as in \citep{leguenAugmentingPhysicalModels2021} and other studies).
It is subsumed by the expression \eqref{eq:dec} by setting $\mathcal{F} [ f_\A, f_\P; \bm{z}_\P, \bm{z}_\A ] \coloneqq \operatorname{ODEsolve}[f_\P(\bm{z}_\P) + f_\A(\bm{z}_\A) = 0]$.
Let us generalize the idea.
Our definition of the decoder in \eqref{eq:dec} allows not only additive augmentation of ODE but also broader range of architectures.
The composition of $f_\P$ and $f_\A$ is not limited to be additive because we consider general composition of functions $f_\A$ and $f_\P$.
Moreover, the form of the physics model is not limited to ODEs.
We list some examples of the configuration:
\begin{itemize}[leftmargin=*,itemsep=0pt,topsep=0pt]
    \item If equation $f_\P=0$ has a closed-form solution $S_{f_\P}\in\mathcal{Y}_\P$ (assuming that the solution space coincides with $\mathcal{Y}_\P$, just for ease of discussion), then $\mathcal{F}$ is simply an evaluation of $f_\A$, for example, $\mathcal{F}[f_\P,f_\A;\bm{z}_\A] \coloneqq f_\A(S_{f_\P}, \bm{z}_\A)$.
    \item If an algebraic equation $f_\P=0$ or $f_\A \circ f_\P=0$ has no closed-form solution, then $\mathcal{F}$ will have a differentiable optimizer, e.g., $\mathcal{F}[f_\P,f_\A] \coloneqq f_\A(\arg\min \Vert f_\P \Vert^2)$ or $\mathcal{F} \coloneqq \arg\min \Vert f_\A \circ f_\P \Vert^2$.
    \item $f_\P=0$ or $f_\A \circ f_\P=0$ can be a stochastic differential equation (and $\mathcal{F}$ contains its solver), for which $\bm{z}_\P$ and/or $\bm{z}_\A$ would become a sequence encoding the realization of the process noise.
\end{itemize}
The role of $f_\A$ can also be diverse; it can work not only as a complement of physics models inside equations, but also as correction of numerical errors of solvers or optimizers, downsampling or upsampling, and observables (e.g., from angle sequence to video of a pendulum).


\subsubsection{Encoder}

The encoder of a physics-integrated VAE accordingly comprises two parts: for posterior inference of $\bm{z}_\P$ and for that of $\bm{z}_\A$.
We consider the following decomposition of the approximated posterior:
\begin{equation}\label{eq:enc}\begin{gathered}
    q_\psi(\bm{z}_\P, \bm{z}_\A \mid \bm{x})
    \coloneqq q_\psi (\bm{z}_\A \mid \bm{x}) q_\psi (\bm{z}_\P \mid \bm{x}, \bm{z}_\A),
    \\
    \text{where}\quad
    q_\psi (\bm{z}_\A \mid \bm{x})
    \coloneqq \mathcal{N} \big( \bm{z}_\A \mid g_\A(\bm{x}), \bm\Sigma_\A \big),
    \quad
    q_\psi (\bm{z}_\P \mid \bm{x}, \bm{z}_\A)
    \coloneqq \mathcal{N} \big( \bm{z}_\P \mid g_\P(\bm{x}, \bm{z}_\A), \bm\Sigma_\P \big).
\end{gathered}\end{equation}
$g_\A \colon \mathcal{X} \to \mathcal{Z}_\A$ and $g_\P \colon \mathcal{X} \times \mathcal{Z}_\A \to \mathcal{Z}_\P$ are recognition networks.
We denote the trainable parameters of $g_\A$ and $g_\P$ (and $\bm\Sigma_\A$ and $\bm\Sigma_\P$) as $\psi$.
This particular dependency is for our regularization method in Section~\ref{reg:dataug}, where $g_\P$ should first remove the information of $\bm{z}_\A$ from $\bm{x}$ and then infer $\bm{z}_\P$.


\subsection{Evidence lower bound}

The VAE is to be learned as usual by maximizing the lower bound of the marginal log likelihood known as evidence lower bound (ELBO).
In our case, it is straightforward to derive:
\begin{equation} \label{eq:elbo}\begin{aligned}
    &
    \text{ELBO}(\theta,\psi; \bm{x}) =
    \mathbb{E}_{q_\psi(\bm{z}_\P, \bm{z}_\A \mid \bm{x})} \log p_\theta(\bm{x} \mid \bm{z}_\P, \bm{z}_\A)
    \\[-2pt]
    &\qquad\qquad\qquad\quad ~
    - D_\mathrm{KL} \big[ q_\psi (\bm{z}_\A \mid \bm{x}) \Mid p(\bm{z}_\A) \big]
    - \mathbb{E}_{q_\psi (\bm{z}_\A \mid \bm{x})} D_\mathrm{KL} \big[ q_\psi (\bm{z}_\P \mid \bm{x}, \bm{z}_\A) \Mid p(\bm{z}_\P) \big].
\end{aligned}\end{equation}
\section{Striking balance between physics and trainable models}
\label{reg}

We propose a regularized learning objective for physics-integrated VAEs.
It comprises two types of regularizers.
The first is for regularizing unnecessary flexibility of function approximators like neural networks and presented in Section~\ref{reg:harness}.
The second is for grounding encoder's output to physics parameters and presented in Section~\ref{reg:dataug}.
The overall objective is summarized in Section~\ref{reg:overall}.


\subsection{Regularizing excess flexibility of trainable functions}
\label{reg:harness}

If the trainable component of the physics-integrated VAE (i.e., $f_\A$) has rich expression capability, as is often the case with deep neural networks, merely maximizing the ELBO in \eqref{eq:elbo} provides no guarantee that the physics-based component (i.e., $f_\P$) will be used in a meaningful manner; e.g., $f_\P$ may just be ignored.
We want to ensure that $f_\A$ does not unnecessarily dominate the behavior of the entire model and that $f_\P$ is not ignored.
To this end, we borrow an idea from the \emph{posterior predictive check} (PPC), a procedure to check the validity of a statistical model \citep[see, e.g.,][]{BDA3}.
Whereas the standard PPCs examine the discrepancy between distributions of a model and data, we compute the discrepancy between those of the model and its ``physics-only'' reduced version, for monitoring and balancing the contributions of parts of the model.

For the sake of argument, suppose that a given physics model $f_\P$ is completely correct for given data.
Then, the discrepancy between the original model and its ``physics-only'' reduced model (where $f_\A$ is somehow invalidated) should be close to zero because the decoder of both the original model (with $f_\P$ and $f_\A$ working) and the reduced model (with only $f_\P$ working) should coincide in an ideal limit with the true data-generating process.
Even if $f_\P$ captures only a part of the truth, the discrepancy should be kept small, if not zero, to ensure meaningful use of the physics models in the overall model.

The ``physics-only'' reduced model is created as follows.
Recall that the original VAE is defined by Eqs.~\eqref{eq:dec} and \eqref{eq:enc}.
We define the decoder of the reduced model by replacing $f_\A \colon \mathcal{Y}_\P \times \mathcal{Z}_\A \to \mathcal{Y}_\A$ of \eqref{eq:dec} with a \emph{baseline function} $h_\A \colon \mathcal{Y}_\P \to \mathcal{Y}_\A$.
That is, the reduced observation model is
\begin{equation}\label{eq:dec_P}
    p_{\theta^\mathrm{r}}^\mathrm{r} (\bm{x} \mid \bm{z}_\P, \bm{z}_\A) \coloneqq \mathcal{N} \big( \bm{x} \mid \mathcal{F}[ h_\A, f_\P; \bm{z}_\P ], \bm\Sigma_x \big),
    \tag{\ref*{eq:dec}r}
\end{equation}
where we omit $\bm{z}_\A$ from the argument of $\mathcal{F}$ because $h_\A$ no longer takes it.
We denote the set of the trainable parameters of such a model as $\theta^\mathrm{r} \coloneqq \theta \backslash \operatorname{param}(f_\A) \cup \operatorname{param}(h_\A)$.
The corresponding encoder is defined as follows.
Recall that in the original model, posterior distributions of both $\bm{z}_\P$ and $\bm{z}_\A$ are inferred in \eqref{eq:enc} and then used for reconstructing each input $\bm{x}$ in \eqref{eq:dec}.
On the other hand, in the ``physics-only'' reduced model, $\bm{z}_\A$ is not referred to by \eqref{eq:dec_P}, which makes it less meaningful to place a particular posterior of $\bm{z}_\A$ for each $\bm{x}$.
Hence, we define the ``physics-only'' encoder by marginalizing out $\bm{z}_\A$ and using prior\footnote{It is just for defining $q_\psi^\mathrm{r}$ on the common support with $q_\psi$. Any non-informative distributions of $\bm{z}_\A$ are fine.} $p(\bm{z}_\A)$ instead.
That is, the reduced posterior is
\begin{equation}\label{eq:enc_P}
    q_\psi^\mathrm{r} (\bm{z}_\A, \bm{z}_\P \mid \bm{x}) \coloneqq
    p(\bm{z}_\A)
    \int q_\psi (\bm{z}_\P, \bm{z}_\A \mid \bm{x}) \mathrm{d}\bm{z}_\A.
    \tag{\ref*{eq:enc}r}
\end{equation}

%

Below we give a guideline for the choice of the baseline function, $h_\A$:
\begin{itemize}[leftmargin=*,itemsep=0pt,topsep=0pt]
    \item If the ranges of $f_\P$ and $f_\A$ are the same (i.e., $\mathcal{Y}_\P=\mathcal{Y}_\A$), then $h_\A$ can be an identity function $h_\A=\mathrm{Id}$.
    Note that in the additive case $f_\A \circ f_\P = f_\P + f_{\A'}$, where $f_{\A'}$ is a trainable function, replacing $f_\A$ with $h_\A=\mathrm{Id}$ is equivalent to replacing $f_{\A'}$ with $h_{\A'}=0$.
    \item If $\mathcal{Y}_\P \neq \mathcal{Y}_\A$, then $h_\A$ can be a linear or affine map from $\mathcal{Y}_\P$ to $\mathcal{Y}_\A$.
    For example, if $\mathcal{Y}_\P=\mathbb{R}^{d_\P}$ and $\mathcal{Y}_\A=\mathbb{R}^{d_\A}$ ($d_\P \neq d_\A$), then we can set $h_\A(f_\P(\bm{z}_\P)) = \bm{W} f_\P(\bm{z}_\P)$ where $\bm{W}\in\mathbb{R}^{d_\A \times d_\P}$.
\end{itemize}

The idea is to minimize the discrepancy between the full model and the ``physics-only'' reduced model.
In particular, we minimize the discrepancy between the posterior predictive distributions
\begin{equation}\label{eq:lact_org}\begin{gathered}
    D_\mathrm{KL} \big[ p_{\theta,\psi}(\tilde{\bm{x}} \mid X) \Mid p_{\theta^\mathrm{r},\psi}^\mathrm{r}(\tilde{\bm{x}} \mid X) \big],
    \quad\text{where}
    \\
    p_{\theta,\psi} (\tilde{\bm{x}} \mid X)
    = \int p_\theta (\tilde{\bm{x}} \mid \bm{z}_\P, \bm{z}_\A)
        q_\psi (\bm{z}_\P, \bm{z}_\A \mid \bm{x})
        p_\mathrm{d}(\bm{x} \mid X)
        \mathrm{d}\bm{z}_\P \mathrm{d}\bm{z}_\A \mathrm{d}\bm{x},
    \\
    p_{\theta^\mathrm{r},\psi}^\mathrm{r} (\tilde{\bm{x}} \mid X)
    = \int p_{\theta^\mathrm{r}}^\mathrm{r} (\tilde{\bm{x}} \mid \bm{z}_\P, \bm{z}_\A)
        q_\psi^\mathrm{r} (\bm{z}_\P, \bm{z}_\A \mid \bm{x})
        p_\mathrm{d}(\bm{x} \mid X)
        \mathrm{d}\bm{z}_\P \mathrm{d}\bm{z}_\A \mathrm{d}\bm{x}.
\end{gathered}\end{equation}
$p_\mathrm{d}(\bm{x} \mid X)$ is the empirical distribution with the support on data $X \coloneqq \{ \bm{x}_1, \dots, \bm{x}_n \}$.
We use $\tilde{\bm{x}}$, instead of $\bm{x}$, just for avoiding notational confusion by clarifying the target of integral $\int \mathrm{d}\bm{x}$.


Unfortunately, analytically computing \eqref{eq:lact_org} is usually intractable.
Hence, we take the following upper bound of \eqref{eq:lact_org} (a proof is in Appendix~B, and further remarks are in Appendix~C):
\begin{proposition}\label{prop:ub}
    Let $p_\theta$ and $p_\theta^\mathrm{r}$ be the shorthand of $p_\theta(\tilde{\bm{x}} \mid \bm{z}_\P, \bm{z}_\A)$ in \eqref{eq:dec} and $p_{\theta^\mathrm{r}}^\mathrm{r}(\tilde{\bm{x}} \mid \bm{z}_\P, \bm{z}_\A)$ in \eqref{eq:dec_P}, respectively.
    Let $p_\P$ and $p_\A$ be some distributions of $\bm{z}_\P$ and $\bm{z}_\A$, e.g., $p(\bm{z}_\P)$ and $p(\bm{z}_\A)$ using the priors in \eqref{eq:prior}, respectively.
    The KL divergence in \eqref{eq:lact_org} can be upper bounded as follows:
    \begin{multline}
        D_\mathrm{KL} \big[ p_{\theta,\psi}(\tilde{\bm{x}} \mid X) \Mid p_{\theta^\mathrm{r},\psi}^\mathrm{r}(\tilde{\bm{x}} \mid X) \big]
        \leq
        \mathbb{E}_{p_\mathrm{d}(\bm{x} \mid X)} \Big[
            \mathbb{E}_{q_\psi(\bm{z}_\P, \bm{z}_\A \mid \bm{x})} D_\mathrm{KL} [ p_\theta \Mid p_\theta^\mathrm{r} ]
            \\
            + D_\mathrm{KL} [ q_\psi(\bm{z}_\A \mid \bm{x}) \Mid p_\A ] + \mathbb{E}_{q_\psi(\bm{z}_\A \mid \bm{x})} D_\mathrm{KL} [ q_\psi(\bm{z}_\P \mid \bm{z}_\A, \bm{x}) \Mid p_\P ]
        \Big].
        \label{eq:lact_ub}
    \end{multline}
\end{proposition}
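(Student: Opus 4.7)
The plan is to lift the KL between the two posterior predictives to a KL between joint distributions on $(\tilde{\bm{x}}, \bm{z}_\P, \bm{z}_\A, \bm{x})$ via the data-processing inequality, unfold the resulting joint KL with the chain rule, and then use nonnegativity of KL to introduce the arbitrary auxiliary distributions $p_\P$ and $p_\A$.

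First I would form the full-model joint $P \coloneqq p_\theta(\tilde{\bm{x}} \mid \bm{z}_\P, \bm{z}_\A)\, q_\psi(\bm{z}_\P, \bm{z}_\A \mid \bm{x})\, p_\mathrm{d}(\bm{x} \mid X)$ and, on the reduced side, a \emph{family} of joints $Q_{p_\A} \coloneqq p^\mathrm{r}_{\theta^\mathrm{r}}(\tilde{\bm{x}} \mid \bm{z}_\P, \bm{z}_\A)\, p_\A(\bm{z}_\A)\, q_\psi(\bm{z}_\P \mid \bm{x})\, p_\mathrm{d}(\bm{x} \mid X)$ indexed by an arbitrary $p_\A$. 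The crucial observation is that $p^\mathrm{r}_{\theta^\mathrm{r}}$ is constant in $\bm{z}_\A$ by construction---the functional $\mathcal{F}[h_\A, f_\P; \bm{z}_\P]$ in \eqref{eq:dec_P} does not take $\bm{z}_\A$ as an argument---so $p_\A$ integrates to one upon marginalization and the $\tilde{\bm{x}}$-marginal of $Q_{p_\A}$ coincides with $p^\mathrm{r}_{\theta^\mathrm{r},\psi}(\tilde{\bm{x}} \mid X)$ for \emph{every} choice of $p_\A$. The monotonicity of KL under marginalization then yields $D_\mathrm{KL}[p_{\theta,\psi}(\tilde{\bm{x}} \mid X) \Mid p^\mathrm{r}_{\theta^\mathrm{r},\psi}(\tilde{\bm{x}} \mid X)] \leq D_\mathrm{KL}[P \Mid Q_{p_\A}]$.

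Next I would expand $D_\mathrm{KL}[P \Mid Q_{p_\A}]$ by the chain rule: the shared factor $p_\mathrm{d}(\bm{x} \mid X)$ contributes nothing, and the decomposition $q_\psi(\bm{z}_\P, \bm{z}_\A \mid \bm{x}) = q_\psi(\bm{z}_\A \mid \bm{x})\, q_\psi(\bm{z}_\P \mid \bm{z}_\A, \bm{x})$ separates the log-ratio into three additive pieces. Integrating out $\tilde{\bm{x}}$ turns the decoder piece into $\mathbb{E}_{q_\psi} D_\mathrm{KL}[p_\theta \Mid p_\theta^\mathrm{r}]$, and the two encoder pieces become $D_\mathrm{KL}[q_\psi(\bm{z}_\A \mid \bm{x}) \Mid p_\A]$ and $\mathbb{E}_{q_\psi(\bm{z}_\A \mid \bm{x})} D_\mathrm{KL}[q_\psi(\bm{z}_\P \mid \bm{z}_\A, \bm{x}) \Mid q_\psi(\bm{z}_\P \mid \bm{x})]$. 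This matches the target bound except for the last term, which at this point carries the marginal $q_\psi(\bm{z}_\P \mid \bm{x})$ where the statement wants an arbitrary $p_\P$.

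Finally, to close the gap I would apply the chain-rule identity $\mathbb{E}_{q_\psi(\bm{z}_\A \mid \bm{x})} D_\mathrm{KL}[q_\psi(\bm{z}_\P \mid \bm{z}_\A, \bm{x}) \Mid p_\P] = \mathbb{E}_{q_\psi(\bm{z}_\A \mid \bm{x})} D_\mathrm{KL}[q_\psi(\bm{z}_\P \mid \bm{z}_\A, \bm{x}) \Mid q_\psi(\bm{z}_\P \mid \bm{x})] + D_\mathrm{KL}[q_\psi(\bm{z}_\P \mid \bm{x}) \Mid p_\P]$, obtained by inserting $\pm\log q_\psi(\bm{z}_\P \mid \bm{x})$ inside the log-ratio; since the last summand is nonnegative, replacing $q_\psi(\bm{z}_\P \mid \bm{x})$ with any $p_\P$ only enlarges the bound, which delivers the stated inequality. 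The main obstacle I anticipate is the setup in step one---recognizing that $p_\A$ can be treated as a \emph{free} parameter in $Q_{p_\A}$ precisely because the reduced decoder is $\bm{z}_\A$-independent by construction, which is what ultimately permits arbitrary $p_\A$ in the final bound; the chain-rule and nonnegativity manipulations are then routine.
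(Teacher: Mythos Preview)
Your proposal is correct and follows essentially the same route as the paper: lift the marginal KL to a joint KL on $(\tilde{\bm{x}},\bm{z}_\P,\bm{z}_\A,\bm{x})$ via monotonicity under marginalization, expand by the chain rule to isolate the decoder term and an encoder term that is the conditional mutual information $I(\bm{z}_\P;\bm{z}_\A\mid\bm{x})$, and finally bound that mutual information variationally by $\mathbb{E}_{q_\psi(\bm{z}_\A\mid\bm{x})}D_\mathrm{KL}[q_\psi(\bm{z}_\P\mid\bm{z}_\A,\bm{x})\Mid p_\P]$. The only cosmetic difference is that the paper plugs in the specific reduced encoder $q_\psi^\mathrm{r}=p(\bm{z}_\A)\,q_\psi(\bm{z}_\P\mid\bm{x})$ from \eqref{eq:enc_P} and appeals to its Lemma~2 for the MI bound, whereas you parameterize the lifted $Q$ by an arbitrary $p_\A$ from the outset (justified by the $\bm{z}_\A$-independence of the reduced decoder) and derive the MI bound via the add--subtract identity; these are the same argument in different packaging.
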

\begin{definition}
    Let us denote the upper bound \eqref{eq:lact_ub} by $\mathbb{E}_{p_\mathrm{d}(\bm{x} \mid X)}\hat{D}(\theta,\operatorname{param}(h),\psi;\bm{x})$.
    The regularization for inhibiting unnecessary flexibility of trainable functions is defined as minimization of
    \begin{equation}\label{eq:pcc}
        R_\text{PPC}(\theta,\operatorname{param}(h),\psi) \coloneqq \mathbb{E}_{p_\mathrm{d}(\bm{x} \mid X)}\hat{D}(\theta,\operatorname{param}(h),\psi;\bm{x}).
    \end{equation}
\end{definition}
\begin{remark}
    When multiple trainable functions are differently used in a model (e.g., inside \emph{and} outside an equation solver), which is often the case in practice, the definition of $R_\text{PPC}$ should be generalized to consider marginal contribution of every trainable function.
    See Appendix~A.
\end{remark}



\subsection{Grounding physics encoder by physics-based data augmentation}
\label{reg:dataug}

Toward properly learning physics-integrated VAEs, minimizing $R_\text{PPC}$ solely may not be enough because inferred $\bm{z}_\P$ may be still meaningless but makes $R_\text{PPC}$ not that large (e.g., with solution of $f_\P$ fluctuating around the mean pattern of data), and then optimization may not be able to escape such local minima.
Though it is difficult to avoid such a local solution perfectly, we can alleviate the situation by considering additional objectives to encourage a proper use of the physics.

The idea is to use the physics model as a source of information for data augmentation, which helps us to ground the output of the recognition network, $g_\P$ in \eqref{eq:enc}, to the parameters of $f_\P$.
We want to draw some $\bm{z}_\P$, feed it to the physics model $f_\P$ (and a solver if any), and use the generated signal as additional data during training.
A technical challenge to this end is that because the physics model may be incomplete, the artificial signals from it and the real signals may have different natures.
To compensate such difference, we arrange a particular functionality of the physics encoder, $g_\P$.
\begin{wrapfigure}[13]{r}{37mm}
    \centering\vspace*{-8pt}
        \scalebox{0.68}{\begin{tikzpicture}[inner sep=0pt, outer sep=0pt]
    \node[latent, label={[yshift=1mm] $\bm{x}$}] (x) {};
    \node[factor, right=5mm of x, yshift=5mm, label={[yshift=1mm] $g_\A$}, fill=mygreen] (ga) {};
    \node[factor, right=5mm of x, yshift=-5mm, label={[yshift=1mm,xshift=1mm] $g_{\P,1}$}, fill=myyellow] (gp1) {};
    \node[latent, right=5mm of gp1] (gp1val) {};
    \node[factor, right=5mm of gp1val, label={[yshift=1mm] $g_{\P,2}$}, fill=myyellow] (gp2) {};
    \node[latent, right=5mm of gp2, label={[yshift=1mm] $\bm{z}_\P$}] (zp) {};
    \node[latent, right=21mm of ga, label={[yshift=1mm] $\bm{z}_\A$}] (za) {};
    \node[factor, right=5mm of za, label={[yshift=1mm] $f_\A$}, fill=myblue] (fa) {};
    \node[factor, right=5mm of zp, label={[yshift=1mm] $f_\P$}, fill=myred] (fp) {};
    \node[latent, right=43mm of x, label={[yshift=1mm] $\hat{\bm{x}}$}] (xhat) {};
    \node[factor, below=8mm of fp, label={[yshift=1mm] $h$}, fill=mypurple] (h) {};
    \node[latent, below=5mm of xhat, label={[yshift=1mm] $\bm{x}^\mathrm{r}$}] (xr) {};
    \node[factor, below=7mm of zp, xshift=-4mm, label={[yshift=1mm] $R_{\text{DA},1}$}, fill=mygray] (rda1) {};
    \edge[dashed, bend right=75] {gp1val} {rda1};
    \edge[dashed, bend left=75] {xr} {rda1};
    \edge[] {x} {ga};
    \edge[] {x} {gp1};
    \edge[] {gp1} {gp1val};
    \edge[] {gp1val} {gp2};
    \edge[] {gp2} {zp};
    \edge[] {ga} {za};
    \edge[] {za} {fa};
    \edge[] {zp} {fp};
    \edge[] {fa} {xhat};
    \edge[] {fp} {xhat};
    \edge[] {fp} {xr};
    \edge[] {h} {xr};
\end{tikzpicture}}
        \par
        \vspace*{-8pt}
        \scalebox{0.68}{\begin{tikzpicture}[inner sep=0pt, outer sep=0pt]
    \node[latent, label={[yshift=1mm] $\bm{z}_\P^\star$}] (zpstar) {};
    \node[factor, right=5mm of zpstar, label={[yshift=1mm] $f_\P$}, fill=myred] (fp) {};
    \node[factor, above=8mm of fp, label={[yshift=1mm] $h$}, fill=mypurple] (h) {};
    \node[latent, right=5mm of fp, yshift=5mm, label={[yshift=1mm] $\bm{x}^\mathrm{r}$}] (xr) {};
    \node[factor, right=20mm of xr, label={[yshift=1mm] $g_{\P,2}$}, fill=myyellow] (gp2) {};
    \node[latent, right=5mm of gp2, label={[yshift=1mm] $\bm{z}_\P$}] (zp) {};
    \node[factor, below=7mm of zp, xshift=-20mm, label={[yshift=1mm] $R_{\text{DA},2}$}, fill=mygray] (rda2) {};
    \edge[dashed, bend right=40] {zpstar} {rda2};
    \edge[dashed, bend left=40] {zp} {rda2};
    \edge[] {zpstar} {fp};
    \edge[] {h} {xr};
    \edge[] {fp} {xr};
    \edge[] {xr} {gp2};
    \edge[] {gp2} {zp};
\end{tikzpicture}}
    \vspace*{-11pt}
    \caption{Diagrams of (\emph{upper}) $R_{\text{DA},1}$ in \eqref{eq:unmix} and (\emph{lower}) $R_{\text{DA},2}$ in \eqref{eq:dataug}.}
    \label{fig:gms}
\end{wrapfigure}

Let $\bm{z}^\star_\P$ be a sample drawn from some distribution of $\bm{z}_\P$ (e.g., prior $p(\bm{z}_\P)$).
We artificially generate signals $\bm{x}^\mathrm{r}(\bm{z}^\star_\P)$ by feeding $\bm{z}^\star_\P$ to the ``physics-only'' decoding process in \eqref{eq:dec_P}, that is,
\begin{equation}
    \bm{x}^\mathrm{r}(\bm{z}^\star_\P) \coloneqq \mathcal{F} [ h_\A, f_\P; \bm{z}_\P = \bm{z}^\star_\P ].
\end{equation}
We want the physics-part recognition network, $g_\P$, to successfully estimate $\bm{z}^\star_\P$ given the corresponding $\bm{x}^\mathrm{r}(\bm{z}^\star_\P)$, which is necessary to say that the result of the inference by $g_\P$ is grounded to the parameters of $f_\P$.
However, in general, real data $\bm{x}$ and the augmented data $\bm{x}^\mathrm{r}(\bm{z}^\star_\P)$ have different natures because $f_\P$ may miss some aspects of the true data-generating process.

We handle this issue by considering a specific design of the physics-part recognition network, $g_\P$.
We decompose $g_\P$ into two stages as $g_\P(\bm{x}, \bm{z}_\A) = g_{\P,2} (g_{\P,1}(\bm{x}, \bm{z}_\A))$ without loss of generality.
On one hand, $g_{\P,1}$ should transform real data $\bm{x}$ to signals that resemble the physics-based augmented signal, $\bm{x}^\mathrm{r}$.
In other words, $g_{\P,1}$ should ``cleanse'' real data into a virtual ``physics-only'' counterpart. 
We enforce such a functionality of $g_{\P,1}$ by making its output close to the following quantity:
\begin{equation}
    \bm{x}^\mathrm{r}(g_\P(\bm{x}, \bm{z}_\A)) = \mathcal{F} [ h_\A, f_\P; \bm{z}_\P = g_\P(\bm{x}, \bm{z}_\A) ].
\end{equation}
On the other hand, $g_{\P,2}$ should receive such ``cleansed'' input and return the (sufficient statistics of) posterior of $\bm{z}_\P$.
If the aforementioned functionality of $g_{\P,1}$ is successfully realized, we can directly self-supervise $g_{\P,2}$ with $\bm{x}^\mathrm{r}(\bm{z}^\star_\P)$ because $\bm{x}^\mathrm{r}(g_\P(\bm{x}, \bm{z}_\A))$ and $\bm{x}^\mathrm{r}(\bm{z}^\star_\P)$ should have similar nature.

In summary, we define a couple of regularizers for setting such functionality of $g_{\P,1}$ and $g_{\P,2}$ as follows (with the corresponding diagrams of computation shown in Figure~\ref{fig:gms}):
\begin{definition}
    Let $\operatorname{sg}[\cdot]$ be the stop-gradient operator.
    The regularization for the physics-based data augmentation is defined as minimization of
    \begin{align}
        R_{\text{DA},1}(\psi)
        &\coloneqq \mathbb{E}_{p_\mathrm{d}(\bm{x} \mid X) q(\bm{z}_\A \mid \bm{x})} \big\Vert
            g_{\P,1} (\bm{x}, \bm{z}_\A)
            - \operatorname{sg} \big[ \bm{x}^\mathrm{r}(g_\P(\bm{x}, \bm{z}_\A)) \big]
        \big\Vert_2^2
        \quad\text{and}
        \label{eq:unmix}
        \\
        R_{\text{DA},2}(\psi)
        &\coloneqq \mathbb{E}_{\bm{z}^\star_\P} \big\Vert g_{\P,2} \big( \operatorname{sg} \big[ \bm{x}^\mathrm{r}(\bm{z}^\star_\P) \big] \big) - \bm{z}^\star_\P \big\Vert_2^2.
        \label{eq:dataug}
    \end{align}
\end{definition}


\subsection{Overall regularized learning objective}
\label{reg:overall}

The overall regularized learning problem of the proposed physics-integrated VAEs is as follows:
\begin{equation*}
    \underset{\theta,\operatorname{param}(h),\psi}{\text{minimize}} ~~
    -\mathbb{E}_{p_\mathrm{d}(\bm{x} \mid X)}\text{ELBO}(\theta,\psi; \bm{x})
    + \alpha R_\text{PPC}(\theta,\operatorname{param}(h),\psi)
    + \beta R_{\text{DA},1}(\psi)
    + \gamma R_{\text{DA},2}(\psi),
\end{equation*}
where each term appears in \eqref{eq:elbo}, \eqref{eq:pcc}, \eqref{eq:unmix}, and \eqref{eq:dataug}, respectively.
Recall that $\theta$ and $\psi$ are the sets of the parameters of the full model's decoder \eqref{eq:dec} and encoder \eqref{eq:enc}, respectively, and that $\operatorname{param}(h)$ denotes the set of the parameters of $h$, which may be empty.
If we cannot specify a reasonable sampling distribution of $\bm{z}^\star_\P$ needed in \eqref{eq:dataug}, we do not use $R_{\text{DA},1}$ and $R_{\text{DA,2}}$; it may happen when the semantics of $\bm{z}_\P$ are not inherently grounded, e.g., when $f_\P$ is a \emph{neural} Hamilton's equation \citep{tothHamiltonianGenerativeNetworks2020}.

\section{Related work}
\label{related}

The integration of theory-driven and data-driven methodologies has been sought in various ways.
We overview some perspectives in this section and more in Appendix~D.

\paragraph{Physics+ML in model design}
Integration in model design, often called gray-box or hybrid modeling, has been studied for decades \citep[e.g.,][]{psichogiosHybridNeuralNetworkfirst1992,rico-martinezContinuoustimeNonlinearSignal1994,thompsonModelingChemicalProcesses1994} and is still active, with deep neural networks utilized in various areas \citep[e.g.,][]{youngPhysicallyBasedMachine2017,
raissiDeepHiddenPhysics2018,
longHybridNetIntegratingModelbased2018,
wanDataassistedReducedorderModeling2018,
nutkiewiczDatadrivenUrbanEnergy2018,
ajayAugmentingPhysicalSimulators2018,
ajayCombiningPhysicalSimulators2019,
debezenacDeepLearningPhysical2019,
zengTossingBotLearningThrow2019,
wangIntegratingModeldrivenDatadriven2019,
roehrlModelingSystemDynamics2020,
leguenDisentanglingPhysicalDynamics2020,
muralidharPhyNetPhysicsGuided2020,
belbute-peresCombiningDifferentiablePDE2020,
senguptaEnsemblingGeophysicalModels2020,
rackauckasUniversalDifferentialEquations2020,
liKohnShamEquationsRegularizer2020,
qian_integrating_2021,
silvestri_embedded-model_2021}.
Most recent studies focus on prediction, and the generative modeling has been less investigated.
Moreover, mechanisms to regularize the flexibility of trainable components have hardly been addressed.

The work of \citet{leguenAugmentingPhysicalModels2021} is notable here because they consider a mechanism to regularize the flexibility a trainable component to preserve the utility of physics in the model, even though it is only focused on dynamics learning for forecasting.
They learn an additive hybrid ODE model $\dot{x} = f_\P(x) + f_\A(x)$, where $f_\P$ is a prescribed physics model, and $f_\A$ is a neural network.
Such a model is subsumed in our architecture as exemplified in Section~\ref{model}.
Moreover, \citet{leguenAugmentingPhysicalModels2021} propose to regularize $f_\A$ by minimizing $\Vert f_\A \Vert^2$.
Such a term also appears in one of our regularizers, $R_\text{PPC}$; when the observation noise is Gaussian, the first term of the right-hand side of \eqref{eq:lact_ub} becomes $\mathbb{E} \Vert (f_\A \circ f_\P)-f_\P \Vert_2^2 = \mathbb{E} \Vert f_\P+f_{\A'} - f_\P \Vert_2^2 = \mathbb{E} \Vert f_{\A'} \Vert_2^2$.
Therefore, we get a ``VAE variant'' of \citet{leguenAugmentingPhysicalModels2021} by switching off a part of $R_\text{PPC}$ and the other regularizers, $R_{\text{DA},1}$ and $R_{\text{DA,2}}$.
We examine cases similar to it in our experiment for comparison.

\citet{yildizODE2VAEDeepGenerative2019} and \citet{linialGenerativeODEModeling2020} developed VAEs whose latent variable follows ODEs.
\citet{linialGenerativeODEModeling2020} also suggest grounding the semantics of the latent variable by providing sparse supervision on it.
It is feasible only when we have a chance to observe the latent variable (e.g., with an increased cost) and may often be inherently infeasible in some problem settings including ours.
In our method, we never assume availability of observation of latent variables and instead use the physics models in a self-supervised manner.
While direct comparison is not meaningful due to the difference of settings, we examine a baseline close to the base model of \citet{linialGenerativeODEModeling2020} in our experiment for comparison.

\citet{tothHamiltonianGenerativeNetworks2020} propose a model where the latent variable sequence is governed by the Hamiltonian mechanics with a neural Hamiltonian.
While it does not suppose very specific physics models but considers general mechanics, they can also be included in our framework; that is, $f_\P$ can be a Hamilton's equation with a neural Hamiltonian.
We try such a model in one of our experiments.

\paragraph{Physics+ML in objective design}
Another prevailing strategy is to define objective functions based on physics knowledge \citep[e.g.,][]{stewartLabelfreeSupervisionNeural2017,karpatnePhysicsguidedNeuralNetworks2017,raissiPhysicsinformedNeuralNetworks2019,jiaPhysicsGuidedRNNs2019,yangEnforcingDeterministicConstraints2019,kaltenbachIncorporatingPhysicalConstraints2020,zhangThermodynamicConsistentNeural2020,rixnerProbabilisticGenerativeModel2020,chenPhysicsinformedGenerativeAdversarial2020,wang_understanding_2021}.
In generative modeling, for example, \citet{stinisEnforcingConstraintsInterpolation2019} use residuals from physics models as a feature of GAN's discriminator.
\citet{golanySimGANsSimulatorbasedGenerative2020} regularize the generation from GANs by forcing it close to a prescribed physics relation.
These approaches are often easy to deploy, but an inherent limitation is that given physics knowledge should be complete to some extent, otherwise a physics-based loss is not well-defined.
\section{Experiments}
\label{expts}

We performed experiments on two synthetic datasets and two real-world datasets, for which we prepared instances of physics-integrated VAEs.
We show each particular architecture of physics-integrated VAEs and the corresponding results; some details are deferred to Appendix~E.
While direct comparison is impossible due to the differences of the problem settings, the baseline methods we examined (listed below) are similar to some existing methods \citep{aragon-calvoSelfsupervisedLearningPhysicsaware2020,yildizODE2VAEDeepGenerative2019,tothHamiltonianGenerativeNetworks2020,linialGenerativeODEModeling2020,leguenAugmentingPhysicalModels2021}.
\begin{description}[labelwidth=6em,labelindent=0em,leftmargin=!,topsep=0pt,itemsep=0pt,font={\ttfamily},labelsep*=1em]
    \item[NN-only] Ordinary VAE \citep{kingmaAutoencodingVariationalBayes2014,rezende_stochastic_2014}; the decoder is $\mathbb{E}\bm{x}=f_\A(\bm{z}_\A)$, where $f_\A$ is a neural net.
    \item[Phys-only] Physics VAE; the decoder is $\mathbb{E}\bm{x}=\mathcal{F}[f_\P;\bm{z}_\P]$ with no neural nets. The encoder is with neural nets as ordinary VAEs. This is almost equivalent to the method of \citet{aragon-calvoSelfsupervisedLearningPhysicsaware2020} when the problem is as in Section~\ref{expts:galaxy}.
    \item[NN+solver] VAE with physics solvers; the decoder is $\mathbb{E}\bm{x}=\mathcal{F}[f_\A;\bm{z}_\A]$, where $f_\A$ is a neural net, and $\mathcal{F}$ includes some equation-solving process (e.g., ODE/PDE solver), but no more physics-based knowledge is given (i.e., there is no $f_\P$). This is similar to the methods of, for example, \citet{yildizODE2VAEDeepGenerative2019} and \citet{tothHamiltonianGenerativeNetworks2020}. 
    \item[NN+phys] Physics-integrated VAE learned without the regularizers (i.e., $\alpha=\beta=\gamma=0$); this is similar to the base models of \citet{linialGenerativeODEModeling2020} and \citet{qian_integrating_2021}. Finer ablations are also studied, among which the cases with $\beta=0$ or $\gamma=0$ are similar to the model of \citet{leguenAugmentingPhysicalModels2021}.
    \item[NN+phys+reg] Our proposal; physics-integrated VAE learned with the proposed regularizers.
\end{description}

We aligned the total dimensionality of the latent variables of each method (except \texttt{phys-only}); when $\dim\bm{z}_\A=d_\A$ and $\dim\bm{z}_\P=d_\P$ in \texttt{NN+phys(+reg)}, we set $\dim\bm{z}_\A=d_\A+d_\P$ in \texttt{NN-only} and \texttt{NN+solver}.
The hyperparameters, $\alpha$, $\beta$, and $\gamma$, were chosen with validation set performance.
We investigated the performance sensitivity to them; no large degradation of performance was observed even if we changed the values by $\times 10$ or $\times \frac{1}{10}$ from the chosen values; details are in Appendix~F.


\subsection{Forced damped pendulum}
\label{expts:pendulum}

\begin{figure}
    \begin{minipage}[t]{\textwidth}
        \centering\vspace*{0pt}
        \begin{minipage}[t]{0.62\textwidth}
    \vspace*{2mm}\centering
    \pgfplotsset{height=3.1cm,width=8.9cm}
    \begin{tikzpicture}
        \begin{axis}[compat=newest,
            label style={font=\scriptsize}, xlabel={time $t$}, ylabel={$\vartheta(t)$},
            enlarge x limits=false,
            xmin=0, xmax=8, ymin=-2.5, ymax=1.9,
            xtick pos=left, ytick pos=left,
            legend entries={Truth, Phys-only, NN+solver, NN+phys+reg},
            legend style={at={(0.01,0.01)}, anchor=south west, nodes={scale=0.6, transform shape}, inner sep=1pt},
            legend columns=4,
            legend image code/.code={
                \draw[] 
                plot coordinates {
                    (0cm,0cm)
                    (0.12cm,0cm)        
                    (0.25cm,0cm)         
                };%
            },
            xlabel style={at={(0.5,-1ex)}},
            ]
            \addplot [black, thick] table [x=t, y=data] {pendulum/extrapolation_idx66.txt};
            \addplot [myyellow, dashed, thick] table [x=t, y=noaux] {pendulum/extrapolation_idx66.txt};
            \addplot [mygreen, densely dotted, very thick] table [x=t, y=nophy1] {pendulum/extrapolation_idx66.txt};
            \addplot [myred] table [x=t, y=phy_reg] {pendulum/extrapolation_idx66.txt};
            \addplot [mygray, ultra thick, densely dashed, opacity=0.8, on layer=background] coordinates {(2.45, -2.5) (2.45, 1.9)};
            \node[inner sep=0,outer sep=0,anchor=north west] at (2.55,1.8) {{\scriptsize extrapolation $\rightarrow$}};
            \node[inner sep=0,outer sep=0,anchor=north east] at (2.35,1.8) {{\scriptsize $\leftarrow$ reconstruction}};
        \end{axis}
    \end{tikzpicture}
\end{minipage}
\hfill
\begin{minipage}[t]{0.35\textwidth}
    \vspace*{0pt}\centering
    \caption{Reconstruction and extrapolation of a test sample of the pendulum data. Range $0 \leq t <2.5$ is reconstruction, whereas $t\geq2.5$ is extrapolation.}
    \label{fig:pendulum_extp}
\end{minipage}
    \end{minipage}
    \\
    \begin{minipage}[t]{\textwidth}
        \centering\vspace*{0pt}
        \pgfplotsset{height=2.9cm,width=3.1cm}
%
%
\begin{tikzpicture}
    \begin{axis}[compat=newest,
        label style={font=\scriptsize},
        enlarge x limits=false,
        ylabel=$\vartheta(t)$, 
        ymin=-2, ymax=2,
        xtick pos=left, ytick pos=left,
        legend entries={Truth},
        legend style={at={(0.04,0.04)}, anchor=south west, nodes={scale=0.5, transform shape}, inner sep=0pt},
        legend image code/.code={
            \draw[] 
            plot coordinates {
                (0cm,0cm)
                (0.15cm,0cm)        
                (0.3cm,0cm)         
            };%
        },
        tick label style={font=\ttiny},
        ]
        \addplot [black, thick] table [x=t, y=true] {pendulum/counterfactual_idx7_c0.4_omg0.86.txt};
        \addplot [myblue, very thick, dash dot] table [x=t, y=PAnr] {pendulum/counterfactual_idx7_c0.4_omg0.86.txt};
        \addplot [myred] table [x=t, y=PA] {pendulum/counterfactual_idx7_c0.4_omg0.86.txt};
        \node[inner sep=0,outer sep=0,fill=white,fill opacity=1.0,anchor=south west] at (0.12,1.30) {{\scriptsize $\omega\!=\!0.86$}};
    \end{axis}
\end{tikzpicture}
\begin{tikzpicture}
    \begin{axis}[compat=newest,
        label style={font=\scriptsize},
        enlarge x limits=false,
        yticklabels={,,}, 
        ymin=-2, ymax=2,
        xtick pos=left, ytick pos=left,
        legend entries={NN+phys},
        legend style={at={(0.04,0.04)}, anchor=south west, nodes={scale=0.5, transform shape}, inner sep=0pt},
        legend image code/.code={
            \draw[] 
            plot coordinates {
                (0cm,0cm)
                (0.15cm,0cm)        
                (0.3cm,0cm)         
            };%
        },
        tick label style={font=\ttiny},
        ]
        \addplot [myblue, very thick, dash dot] table [x=t, y=PAnr] {pendulum/counterfactual_idx7_c0.6_omg1.29.txt};
        \addplot [myred] table [x=t, y=PA] {pendulum/counterfactual_idx7_c0.6_omg1.29.txt};
        \addplot [black, thick] table [x=t, y=true] {pendulum/counterfactual_idx7_c0.6_omg1.29.txt};
        \node[inner sep=0,outer sep=0,fill=white,fill opacity=1.0,anchor=south west] at (0.12,1.30) {{\scriptsize $\omega\!=\!1.29$}};
    \end{axis}
\end{tikzpicture}
\begin{tikzpicture}
    \begin{axis}[compat=newest,
        label style={font=\scriptsize},
        enlarge x limits=false,
        yticklabels={,,}, 
        ymin=-2, ymax=2,
        xtick pos=left, ytick pos=left,
        legend entries={NN+phys+reg},
        legend style={at={(0.04,0.04)}, anchor=south west, nodes={scale=0.5, transform shape}, inner sep=0pt},
        legend image code/.code={
            \draw[] 
            plot coordinates {
                (0cm,0cm)
                (0.15cm,0cm)        
                (0.3cm,0cm)         
            };%
        },
        tick label style={font=\ttiny},
        ]
        \addplot [myred] table [x=t, y=PA] {pendulum/counterfactual_idx7_c0.8_omg1.72.txt};
        \addplot [black, thick] table [x=t, y=true] {pendulum/counterfactual_idx7_c0.8_omg1.72.txt};
        \addplot [myblue, very thick, dash dot] table [x=t, y=PAnr] {pendulum/counterfactual_idx7_c0.8_omg1.72.txt};
        \node[inner sep=0,outer sep=0,fill=white,fill opacity=1.0,anchor=south west] at (0.12,1.30) {{\scriptsize $\omega\!=\!1.72$}};
    \end{axis}
\end{tikzpicture}
\begin{tikzpicture}
    \begin{axis}[compat=newest,
        label style={font=\scriptsize},
        enlarge x limits=false,
        yticklabels={,,}, 
        ymin=-2, ymax=2,
        xtick pos=left, ytick pos=left,
        tick label style={font=\ttiny},
        ]
        \addplot [black, thick] table [x=t, y=true] {pendulum/counterfactual_idx7_c1.0_omg2.15.txt};
        \addplot [myred] table [x=t, y=PA] {pendulum/counterfactual_idx7_c1.0_omg2.15.txt};
        \addplot [myblue, very thick, dash dot] table [x=t, y=PAnr] {pendulum/counterfactual_idx7_c1.0_omg2.15.txt};
        \node[inner sep=0,outer sep=0,fill=white,fill opacity=1.0,anchor=south west] at (0.12,1.10) {{\scriptsize $\omega\!=\!\mathbb{E}[z_\P]$}};
        \node[inner sep=0,outer sep=0,fill=white,fill opacity=1.0,anchor=north west] at (0.8,0.9) {{\scriptsize $=\!2.15$}};
        \node[inner sep=0,outer sep=0,fill=white,fill opacity=1.0,anchor=south east] at (2.4,-1.9) {{\scriptsize original}};
    \end{axis}
\end{tikzpicture}
\begin{tikzpicture}
    \begin{axis}[compat=newest,
        label style={font=\scriptsize},
        enlarge x limits=false,
        yticklabels={,,}, 
        ymin=-2, ymax=2,
        xtick pos=left, ytick pos=left,
        tick label style={font=\ttiny},
        ]
        \addplot [black, thick] table [x=t, y=true] {pendulum/counterfactual_idx7_c1.5_omg3.22.txt};
        \addplot [myred] table [x=t, y=PA] {pendulum/counterfactual_idx7_c1.5_omg3.22.txt};
        \addplot [myblue, very thick, dash dot] table [x=t, y=PAnr] {pendulum/counterfactual_idx7_c1.5_omg3.22.txt};
        \node[inner sep=0,outer sep=0,fill=white,fill opacity=1.0,anchor=south west] at (0.12,1.30) {{\scriptsize $\omega\!=\!3.22$}};
    \end{axis}
\end{tikzpicture}
\begin{tikzpicture}
    \begin{axis}[compat=newest,
        label style={font=\scriptsize},
        enlarge x limits=false,
        yticklabels={,,}, 
        ymin=-2, ymax=2,
        xtick pos=left, ytick pos=left,
        tick label style={font=\ttiny},
        ]
        \addplot [black, thick] table [x=t, y=true] {pendulum/counterfactual_idx7_c2.0_omg4.29.txt};
        \addplot [myred] table [x=t, y=PA] {pendulum/counterfactual_idx7_c2.0_omg4.29.txt};
        \addplot [myblue, very thick, dash dot] table [x=t, y=PAnr] {pendulum/counterfactual_idx7_c2.0_omg4.29.txt};
        \node[inner sep=0,outer sep=0,fill=white,fill opacity=1.0,anchor=south west] at (0.12,1.30) {{\scriptsize $\omega\!=\!4.29$}};
    \end{axis}
\end{tikzpicture}
\begin{tikzpicture}
    \begin{axis}[compat=newest,
        label style={font=\scriptsize},
        enlarge x limits=false,
        yticklabels={,,}, 
        ymin=-2, ymax=2,
        xtick pos=left, ytick pos=left,
        tick label style={font=\ttiny},
        ]
        \addplot [black, thick] table [x=t, y=true] {pendulum/counterfactual_idx7_c2.5_omg5.36.txt};
        \addplot [myred] table [x=t, y=PA] {pendulum/counterfactual_idx7_c2.5_omg5.36.txt};
        \addplot [myblue, very thick, dash dot] table [x=t, y=PAnr] {pendulum/counterfactual_idx7_c2.5_omg5.36.txt};
        \node[inner sep=0,outer sep=0,fill=white,fill opacity=1.0,anchor=south west] at (0.12,1.30) {{\scriptsize $\omega\!=\!5.36$}};
    \end{axis}
\end{tikzpicture}
%
\\
\vspace*{-1ex}
\caption{Counterfactual generation for the pendulum data. Horizontal axis is time $t$. The center panel shows the original data, and the rest is the generation with $z_\P$ (i.e., $\omega$) altered while $\bm{z}_\A$ fixed.}
\label{fig:pendulum_cf}
    \end{minipage}
    \vspace*{-2ex}
\end{figure}

\paragraph{Dataset}
We generated data from \eqref{eq:pendulum} with $u(t)= A \omega^2 \cos(2 \pi \phi t)$.
Each data-point $\bm{x}$ is a sequence $\bm{x} \coloneqq [ \vartheta_1 \ \cdots \vartheta_\tau ]\in\mathbb{R}^\tau$, where $\vartheta_j$ is the value of a solution $\vartheta(t_j)$ at $t_j \coloneqq (j-1)\Delta t$.
We randomly drew a sample of the initial condition $\vartheta_1$ (with $\dot\vartheta_1=0$ fixed) and the values of $\omega$, $\zeta$, $A$, and $\phi$ for each sequence.
We generated 2,500 sequences of length $\tau=50$ with $\Delta t = 0.05$ and separated them into a training, validation, and test sets with 1,000, 500, and 1,000 sequences, respectively.

\paragraph{Setting}
We set $f_\P$ as in Section~\ref{model:example}, i.e., $f_\P(\vartheta,z_\P) \coloneqq \ddot\vartheta+z_\P^2\sin(\vartheta)$, where $z_\P\in\mathbb{R}$ should work as angular velocity $\omega$.
We augmented it by $f_{\A,1}(\vartheta, z_{\A,1})$ additively, where $f_{\A,1}$ was a multi-layer perceptron (MLP) and $z_{\A,1} \in \mathbb{R}$.
The ODE $f_\P+f_{\A,1}=0$ was solved with the Euler update scheme in the model.
The model had another MLP\footnote{\label{note:mlp}We used MLP as the data are fixed length. The same holds hereafter. Extension to other networks is easy.} $f_{\A,2}$ with another latent variable $\bm{z}_{\A,2} \in \mathbb{R}^2$ for further modifying the solution of the ODE.
In summary, the decoding process is $\mathcal{F} \coloneqq f_{\A,2}(\operatorname{solve}_\vartheta[f_\P(\vartheta,z_\P)+f_{\A,1}(\vartheta,z_{\A,1})=0], \bm{z}_{\A,2})$.
The construction of the proposed regularizer for such multiple $f_\A$'s is elaborated in Appendix~A.
We used $h_{\A,1}=0$ and $h_{\A,2}=\mathrm{Id}$ as the baseline functions.
The recognition networks, $g_{\A,1}$, $g_{\A,2}$, and $g_\P$, were modeled with MLPs.
We used the initial element of each $\bm{x}$ as an estimation of the initial condition $\vartheta_1$.

\paragraph{Results}
Figure~\ref{fig:pendulum_extp} demonstrates a unique benefit of the hybrid modeling.
We show an example of reconstruction with extrapolation.
Recall that the training data comprise sequences of range $0 \leq t < 2.5$ only; so the results in $t\geq2.5$ are extrapolation (in time) rather than mere reconstruction.
We can observe that while \texttt{NN+solver} cannot extrapolate even if it is equipped with an neural ODE, \texttt{NN+phys+reg} can reconstruct and extrapolate correctly.

Figure~\ref{fig:pendulum_cf} illustrates well the advantage of the proposed regularizers.
We show an example of generation from learned models with $z_\P$ manipulated.
Recall that $z_\P$ is expected to work as pendulum's angular velocity $\omega$.
We took a test sample with $\omega \approx \mathbb{E}[z_\P] \approx 2.15$ and generated signals with the original and different values of $z_\P$, keeping the values of $\bm{z}_\A$ to be the original posterior mean.
We can see that the generation from \texttt{NN+phys+reg} matches better with the signals from the true process.

Table~\ref{tab:pendulum_advdif_err} (left half) summarizes the performance in terms of the reconstruction error and the inference error of physics parameter $\omega$ on the test set.
The errors are reported in mean absolute errors (MAEs).
The inference error of $\omega$ is evaluated by $\vert \mathbb{E}[z_\P] - \omega_\text{true} \vert$.
\texttt{NN+phys+reg} achieves small values in \emph{both} reconstruction error and inference error.
Meanwhile, the MAE of reconstruction by \texttt{phys-only} is significantly worse than those of the other methods, and the MAE of $\omega$ inferred by \texttt{NN+phys} is significantly worse than the others.
These facts imply the effectiveness of the hybrid modeling and the proposed regularizers.



\subsection{Advection-diffusion system}
\label{expts:advdif}

\begin{table}[t]
    \renewcommand{\arraystretch}{0.9}
    \centering
    \caption{Reconstruction errors and inference errors on test sets of the pendulum data and the advection-diffusion data. Averages (and SDs) over 20 random trials are reported.}
    \label{tab:pendulum_advdif_err}
    \setlength{\tabcolsep}{3pt}
    {\small\begin{tabular}{m{1em}m{4.5em}M{3.4em}M{4.3em}M{3.4em}M{4.3em}M{3.6em}M{4.3em}M{3.6em}M{4.3em}}
        \toprule
        & & \multicolumn{4}{c}{Pendulum} & \multicolumn{4}{c}{Advection-diffusion}
        \\
        \cmidrule(lr){3-6}
        \cmidrule(lr){7-10}
        & & \multicolumn{2}{c}{MAE of reconst.} & \multicolumn{2}{c}{MAE of inferred $\omega$} & \multicolumn{2}{c}{MAE of reconst.} & \multicolumn{2}{c}{MAE of inferred $a$} \\
        \midrule
        \multicolumn{2}{l}{\texttt{NN-only}} &
            $0.438$ & \scriptsize{($2.9 \times 10^{-2}$)} &
            \multicolumn{2}{c}{--} &
            $0.0396$ & \scriptsize{($2.2 \times 10^{-4}$)} &
            \multicolumn{2}{c}{--}
            \\
        \multicolumn{2}{l}{\texttt{Phys-only}} &
            $1.55$ & \scriptsize{($7.1 \times 10^{-4}$)} &
            $0.232$ & \scriptsize{($5.9 \times 10^{-3}$)} &
            $0.393$ & \scriptsize{($9.5 \times 10^{-4}$)} &
            $0.0103$ & \scriptsize{($1.5 \times 10^{-3}$)}
            \\
        \multicolumn{2}{l}{\texttt{NN+solver}} &
            $0.439$ & \scriptsize{($2.3 \times 10^{-2}$)} &
            \multicolumn{2}{c}{--} &
            $0.0388$ & \scriptsize{($1.7 \times 10^{-4}$)} &
            \multicolumn{2}{c}{--}
            \\
        \multicolumn{2}{l}{\texttt{NN+phys}} &
            $0.370$ & \scriptsize{($4.3 \times 10^{-2}$)} &
            $1.04$ & \scriptsize{($2.2 \times 10^{-1}$)} &
            $0.0404$ & \scriptsize{($1.2 \times 10^{-2}$)} &
            $0.258$ & \scriptsize{($3.2 \times 10^{-1}$)}
            \\
        \multicolumn{2}{l}{\texttt{NN+phys+reg}} &
            $0.363$ & \scriptsize{($4.8 \times 10^{-2}$)} &
            $0.229$ & \scriptsize{($3.8 \times 10^{-2}$)} &
            $0.0437$ & \scriptsize{($1.5 \times 10^{-3}$)} &
            $0.00951$ & \scriptsize{($6.2 \times 10^{-3}$)}
            \\
            \midrule
        \parbox[t]{1em}{\multirow{3}{*}{\rotatebox[origin=c]{90}{\scriptsize Ablations}}}
        & $\alpha=0$ &
            $0.396$ & \scriptsize{($4.3 \times 10^{-2}$)} &
            $0.889$ & \scriptsize{($1.9 \times 10^{-1}$)} &
            $0.0461$ & \scriptsize{($1.3 \times 10^{-2}$)} &
            $0.0444$ & \scriptsize{($1.4 \times 10^{-2}$)}
            \\
        & $\beta=0$ &
            $0.372$ & \scriptsize{($4.1 \times 10^{-2}$)} &
            $0.223$ & \scriptsize{($3.6 \times 10^{-2}$)} &
            $0.0747$ & \scriptsize{($2.4 \times 10^{-2}$)} &
            $0.199$ & \scriptsize{($2.3 \times 10^{-1}$)}
            \\
        & $\gamma=0$ &
            $0.381$ & \scriptsize{($4.1 \times 10^{-2}$)} &
            $0.276$ & \scriptsize{($4.2 \times 10^{-2}$)} &
            $0.0588$ & \scriptsize{($9.1 \times 10^{-4}$)} &
            $0.0548$ & \scriptsize{($9.4 \times 10^{-7}$)}
            \\
        \bottomrule
    \end{tabular}}
    \renewcommand{\arraystretch}{1}
    \vspace*{-1.7ex}
\end{table}

\begin{figure}
    \begin{minipage}[t]{0.48\textwidth}
        \vspace*{0pt}
        \centering
\pgfplotsset{height=2cm,width=5.6cm}
\setlength{\tabcolsep}{2pt}
\begin{tabular}{m{4.8cm}m{1.5cm}}
    \begin{minipage}[c]{\linewidth}
        \begin{tikzpicture}
            \begin{axis}[compat=newest,
                axis on top,
                label style={font=\scriptsize},
                enlarge x limits=false,
                yticklabels={,,}, xticklabels={,,},
                ylabel=$s$,
                xmin=0, ymin=0, xmax=12.0, ymax=2.0,
                xtick pos=left, ytick pos=left,
                ]
                \addplot[] graphics [xmin=0,ymin=0,xmax=12.0,ymax=2.0] {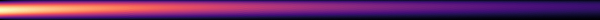};
                \addplot [white, thick, densely dashed, opacity=0.9] coordinates {(0.98, 0) (0.98, 2.0)};
            \end{axis}
        \end{tikzpicture}
    \end{minipage}
    & {\scriptsize Truth}
    \\[-9pt]
    \begin{minipage}[c]{\linewidth}
        \begin{tikzpicture}
            \begin{axis}[compat=newest,
                axis on top,
                label style={font=\scriptsize},
                enlarge x limits=false,
                yticklabels={,,}, xticklabels={,,},
                ylabel=$s$,
                xmin=0, ymin=0, xmax=12.0, ymax=2.0,
                xtick pos=left, ytick pos=left,
                ]
                \addplot[] graphics [xmin=0,ymin=0,xmax=12.0,ymax=2.0] {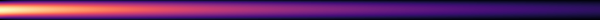};
                \addplot [white, thick, densely dashed, opacity=0.9] coordinates {(0.98, 0) (0.98, 2.0)};
            \end{axis}
        \end{tikzpicture}
    \end{minipage}
    & {\scriptsize\texttt{Phys-only}}
    \\[-9pt]
    \begin{minipage}[c]{\linewidth}
        \begin{tikzpicture}
            \begin{axis}[compat=newest,
                axis on top,
                label style={font=\scriptsize},
                enlarge x limits=false,
                xticklabels={,,}, yticklabels={,,}, 
                ylabel=$s$,
                xmin=0, ymin=0, xmax=12.0, ymax=2.0,
                xtick pos=left, ytick pos=left,
                ]
                \addplot[] graphics [xmin=0,ymin=0,xmax=12.0,ymax=2.0] {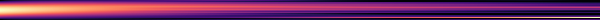};
                \addplot [white, thick, densely dashed, opacity=0.9] coordinates {(0.98, 0) (0.98, 2.0)};
            \end{axis}
        \end{tikzpicture}
    \end{minipage}
    & {\scriptsize\texttt{NN+solver}}
    \\[-9pt]
    \begin{minipage}[c]{\linewidth}
        \begin{tikzpicture}
            \begin{axis}[compat=newest,
                axis on top,
                label style={font=\scriptsize},
                enlarge x limits=false,
                yticklabels={,,}, xticklabels={,,},
                ylabel=$s$,
                xmin=0, ymin=0, xmax=12.0, ymax=2.0,
                xtick pos=left, ytick pos=left,
                ]
                \addplot[] graphics [xmin=0,ymin=0,xmax=12.0,ymax=2.0] {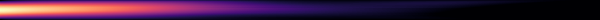};
                \addplot [white, thick, densely dashed, opacity=0.9] coordinates {(0.98, 0) (0.98, 2.0)};
            \end{axis}
        \end{tikzpicture}
    \end{minipage}
    & {\scriptsize\texttt{NN+phys}}
    \\[-9pt]
    \begin{minipage}[c]{\linewidth}
        \vspace*{0pt}
        \begin{tikzpicture}
            \begin{axis}[compat=newest,
                axis on top,
                label style={font=\scriptsize},
                enlarge x limits=false,
                yticklabels={,,}, 
                ylabel=$s$,
                xlabel=time $t$,
                xmin=0, ymin=0, xmax=12.0, ymax=2.0,
                xtick pos=left, ytick pos=left,
                xlabel style={at={(0.5,-1ex)}},
                ]
                \addplot[] graphics [xmin=0,ymin=0,xmax=12.0,ymax=2.0] {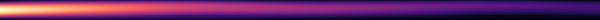};
                \addplot [white, thick, densely dashed, opacity=0.9] coordinates {(0.98, 0) (0.98, 2.0)};
            \end{axis}
        \end{tikzpicture}
    \end{minipage}
    & {\scriptsize\texttt{NN+phys+reg}\vspace*{2pt}}
\end{tabular}
\vspace*{-1ex}
\caption{Reconstruction and extrapolation of a test sample of the advection-diffusion data. Range $0 \leq t < 1$ is reconstruction, whereas $t \geq 1$ is extrapolation; dashed line is the border.}
\label{fig:advdif_extp}
    \end{minipage}
    \hfill
    \begin{minipage}[t]{0.48\textwidth}
        \vspace*{2pt}
        \input{fig_galaxy_random}
    \end{minipage}
    \vspace*{-2ex}
\end{figure}

\paragraph{Dataset}
We generated data from advection-diffusion PDE ${\partial T}/{\partial t} - a \cdot {\partial^2 T}/{\partial s^2} + b \cdot {\partial T}/{\partial s} = 0$, where $s$ is the 1-D spatial dimension.
We approximated the solution $T(s,t)$ on the 12-point even grid from $s=0$ to $s=s_\text{max}$, so each data-point $\bm{x}$ is a sequence of 12-dim vectors, i.e., $\bm{x} \coloneqq [\bm{T}_1 \ \cdots \ \bm{T}_\tau]\in\mathbb{R}^{12 \times \tau}$, where $\bm{T}_j \coloneqq [ T(0,t_j) \ \cdots \ T(s_\text{max},t_j) ]^\tr$ at $t_j \coloneqq (j-1)\Delta t$.
We set the boundary condition as $T(0,t)=T(s_\text{max},t)=0$ and the initial condition as $T(s,0)= c \sin(\pi s/s_\text{max})$.
We randomly drew $a$, $b$, and $c$ for each $\bm{x}$.
We generated 2,500 sequences with $\tau=50$ and $\Delta t = 0.02$ and separated them into a training, validation, and test sets with 1,000, 500, and 1,000 sequences, respectively.

\paragraph{Setting}
We set $f_\P$ as the diffusion PDE, i.e., $f_\P(T,z_\P) \coloneqq \partial T / \partial t - z_\P \partial^2 T / \partial s^2$, where $z_\P\in\mathbb{R}$ should work as diffusion coefficient $a$.
We augmented it by $f_\A(T, \bm{z}_\A)$ additively, where $f_\A$ was an MLP and $\bm{z}_\A \in \mathbb{R}^4$.
Hence, the decoding process is $\mathcal{F} \coloneqq \operatorname{solve}_T[f_\P(T,z_\P)+f_\A(T,\bm{z}_\A)=0]$.
We used $h_\A=0$ as the baseline function.
The recognition networks, $g_\A$ and $g_\P$, were modeled with MLPs.
We used the initial snapshot of each sequence $\bm{x}$ as an estimation of the initial condition $\bm{T}_1$.

\paragraph{Results}
Figure~\ref{fig:advdif_extp} shows an example of reconstruction with extrapolation.
As the training data only comprise sequences of range $0 \leq t < 1$, the remaining range $t \geq 1$ is extrapolation.
Only \texttt{NN+phys+reg} (the bottom panel) achieves adequate extrapolation; \texttt{phys-only} lacks advection, \texttt{NN+solver} has unnatural artifacts, and \texttt{NN+phys} infers $z_\P$ (i.e., diffusion coefficient $a$) wrongly.

Table~\ref{tab:pendulum_advdif_err} (right half) summarizes the reconstruction and inference errors, which are basically consistent with the results in the pendulum example, in the sense that \texttt{NN+phys+reg} achieves reasonable performance both in reconstruction and inference, while \texttt{phys-only} fails reconstruction, and \texttt{NN+phys} fails inference.
Note that the reconstruction performance of \texttt{NN+phys+reg} is slightly worse than some baselines, which is probably due to suboptimal hyperparameters.
In fact, with finer tuning of the hyperparameters, \texttt{NN+phys+reg} can achieve the reconstruction error closer to other methods while almost keeping the inference error\footnote{In the experiment with the advection-diffusion dataset reported in Table~\ref{tab:pendulum_advdif_err}, the selected values of the hyperparameters were $\alpha=0.1$, $\beta=0.01$, and $\gamma=10^6$, which were chosen from only eight candidates (see Appendix~E for detail). When we instead set $\alpha=0.032$, $\beta=0.01$, and $\gamma=10^6$ in the sensitivity experiment (shown in Appendix~F), the reconstruction error of \texttt{NN+phys+reg} was $0.0390$ ($4.5 \times 10^{-4}$), which is comparable to the baselines' performance in Table~\ref{tab:pendulum_advdif_err}. In this setting, the inference error of \texttt{NN+phys+reg} was $0.0103$ ($1.5 \times 10^{-3}$). We only reported the suboptimal values in Table~\ref{tab:pendulum_advdif_err} to align the granularity of the hyperparameter tuning grid with that in the experiment with the pendulum dataset.}.
We also show the performance of ablations of \texttt{NN+phys+reg}, where either of the regularizers was turned off (i.e., $\alpha=0$, $\beta=0$, or $\gamma=0$).
Not surprisingly their performance is worse than the full regularization, especially in terms of the inference error.


\subsection{Galaxy images}
\label{expts:galaxy}

\paragraph{Dataset}
We used images of galaxy of the Galaxy10 dataset \citep{leungDeepLearningMultielement2018}.
We selected the 589 images of the ``Disk, Edge-on, No Bulge'' class and separated them into training, validation, and test sets with 400, 100, and 89 images, respectively.
Each image is of size $69 \times 69$ with three channels.
We performed data augmentation with random rotation and increased the size of the training set by 20 times.

\paragraph{Setting}
We set $f_\P \colon \mathbb{R}_{>0}^{4} \to \mathbb{R}^{69 \times 69}$ as an exponential profile of the light distribution of galaxies \citep[see][and references therein]{aragon-calvoSelfsupervisedLearningPhysicsaware2020} whose input is $\bm{z}_\P \coloneqq [I_0 \ A \ B \ \vartheta]^\tr \in \mathbb{R}_{>0}^4$.
Let $[f_\P(\bm{z}_\P)]_{i,j}$ denote the $(i,j)$-element of the output of $f_\P$.
Then, for $1 \leq i,j \leq 69$, $[f_\P(\bm{z}_\P)]_{i,j} \coloneqq I_0 \exp(-r_{i,j})$, where $r_{i,j}^2 \coloneqq  (\mathsf{X}_j\cos\vartheta - \mathsf{Y}_i\sin\vartheta)^2/A^2 + (\mathsf{X}_j\sin\vartheta + \mathsf{Y}_i\cos\vartheta)^2/B^2$, and $(\mathsf{X}_j, \mathsf{Y}_i)$ is the coordinate on the $69 \times 69$ even grid on $[-1,1] \times [-1,1]$.
We modify the output of $f_\P$ using a U-Net-like neural network $f_\A \colon \mathbb{R}^{69 \times 69} \times \mathbb{R}^{\dim\bm{z}_\A} \to \mathbb{R}^{69 \times 69 \times 3}$.
Thus, the decoding process is $\mathcal{F} \coloneqq f_\A(f_\P(\bm{z}_\P), \bm{z}_\A)$.
We set $\dim\bm{z}_\A=2$ for \texttt{NN+phys+reg}.
We set $h_\A \colon \mathbb{R}^{69 \times 69} \to \mathbb{R}^{69 \times 69 \times 3}$ to be the $\operatorname{repeat}$ operator along the channel axis.
The encoding process is as follows: first, features are extracted from an image $\bm{x}$ by a convolutional net like \citep{aragon-calvoSelfsupervisedLearningPhysicsaware2020}.
The extracted features are flattened and fed to MLPs $g_\P$ and $g_\A$.

\paragraph{Results}
Figure~\ref{fig:galaxy_random} shows an example of original data and random generation from the learned models.
\texttt{NN-only} tends to generate non-realistic images, and \texttt{NN+phys} generates slightly better but still spuriously, whereas \texttt{NN+phys+reg} consistently generates galaxy-like images.
More results (reconstruction, counterfactual generation, and inspection of latent variable) are deferred to Appendix~F.


\subsection{Human gait}

\paragraph{Dataset}
We used a part of the dataset provided by \citep{lencioniHumanKinematicKinetic2019}, which contains measurements of locomotion at different speeds of 50 subjects.
We extracted the angles of hip, knee, and ankle in the sagittal plane.
Data originally comprise sequences of each stride normalized to be 100 steps, so each data-point $\bm{x}$ is a sequence $\bm{x} \coloneqq [ \bm\vartheta_1 \ \cdots \bm\vartheta_{100} ] \in \mathbb{R}^{3 \times 100}$, where $\bm\vartheta_j \coloneqq [\vartheta_{\text{hip},j} \ \vartheta_{\text{knee},j} \ \vartheta_{\text{ankle},j}]^\tr$.
We used different 400, 100, and 344 sequences as training, validation, and test sets, respectively.

\paragraph{Setting}
Biomechanical modeling of gait is a long-standing problem \citep[see, e.g.,][]{RMB2}.
We did not choose a specific model but let $f_\P$ be a trainable Hamilton's equation as in \citep{tothHamiltonianGenerativeNetworks2020,greydanusHamiltonianNeuralNetworks2019}.
$\bm{z}_\P\in\mathbb{R}^{2d_\mathrm{H}}$ worked as the initial conditions of it, where $d_\mathrm{H}$ was the dimensionality of the generalized position.
We let $d_\mathrm{H}=3$ and modeled the neural Hamiltonian with an MLP.
The solution of $f_\P=0$ was transformed by $f_\A$ that also took $\bm{z}_\A\in\mathbb{R}^{15}$ as an argument.
In summary, the decoding process is $\mathcal{F}=f_\A(\operatorname{solve}[f_\P=0], \bm{z}_\A)$.
We set $h_\A$ to be an affine transform at each timestep, which had a weight matrix and a bias as $\operatorname{param}(h)$.
The recognition networks were modeled with MLPs.

\paragraph{Results}
Figure~\ref{fig:hm_gait_rec} is for visually comparing the difference of the learned models' behavior due to the proposed regularizers.
We compare the reconstructions by \texttt{NN+phys} and \texttt{NN+phys+reg}.
The dashed lines show an intermediate of the decoding process, i.e., $\operatorname{solve}[f_\P=0]$, and the red solid lines show the final reconstruction, i.e., $f_\A(\operatorname{solve}[f_\P=0])$.
Without the regularization (upper row), $\operatorname{solve}[f_\P=0]$ returns almost meaningless signals, and $f_\A$ bears the most effort of reconstruction.
On the other hand, with the regularization (lower row), $\operatorname{solve}[f_\P=0]$ already matches well the data, and $f_\A$ modifies it only slightly.
Superiority of the regularized model was also confirmed quantitatively; the average test reconstruction errors were $0.273$ with \texttt{NN+phys} and $0.259$ with \texttt{NN+phys+reg}.

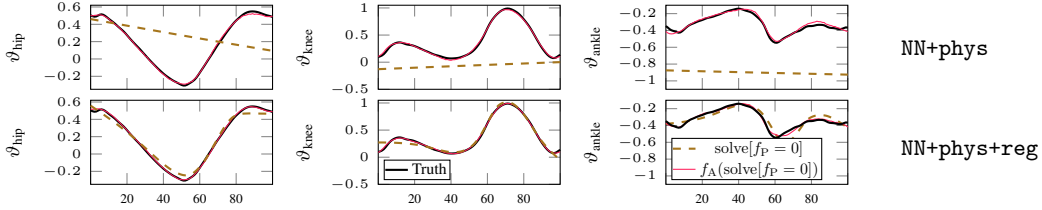
\begin{figure}
    \pgfplotsset{height=2.7cm,width=4cm}
    \begin{minipage}[c]{0.84\linewidth}
        \vspace*{0pt}
        \begin{tikzpicture}
            \begin{axis}[compat=newest,
                label style={font=\scriptsize}, ylabel={$\vartheta_\text{hip}$},
                enlarge x limits=false, ymin=-0.35, ymax=0.62,
                xticklabels={,,},
                ]
                \addplot [black, thick] table [x=norm_t, y=dat1] {hm_gait/reconstruction_idx0.txt};
                \addplot [myred] table [x=norm_t, y=noregPB1] {hm_gait/reconstruction_idx0.txt};
                \addplot [mybrown, thick, dashed] table [x=norm_t, y=noregP1] {hm_gait/reconstruction_idx0.txt};
            \end{axis}
        \end{tikzpicture}
        \begin{tikzpicture}
            \begin{axis}[compat=newest,
                label style={font=\scriptsize}, ylabel={$\vartheta_\text{knee}$},
                enlarge x limits=false, ymin=-0.5, ymax=1.05,
                xticklabels={,,},
                ]
                \addplot [black, thick] table [x=norm_t, y=dat2] {hm_gait/reconstruction_idx0.txt};
                \addplot [myred] table [x=norm_t, y=noregPB2] {hm_gait/reconstruction_idx0.txt};
                \addplot [mybrown, thick, dashed] table [x=norm_t, y=noregP2] {hm_gait/reconstruction_idx0.txt};
            \end{axis}
        \end{tikzpicture}
        \begin{tikzpicture}
            \begin{axis}[compat=newest,
                label style={font=\scriptsize}, ylabel={$\vartheta_\text{ankle}$},
                enlarge x limits=false, ymin=-1.1, ymax=-0.1,
                xticklabels={,,},
                ]
                \addplot [black, thick] table [x=norm_t, y=dat3] {hm_gait/reconstruction_idx0.txt};
                \addplot [myred] table [x=norm_t, y=noregPB3] {hm_gait/reconstruction_idx0.txt};
                \addplot [mybrown, thick, dashed] table [x=norm_t, y=noregP3] {hm_gait/reconstruction_idx0.txt};
            \end{axis}
        \end{tikzpicture}
    \end{minipage}
    \hfill
    \begin{minipage}[c]{0.14\linewidth}
        \vspace*{0pt}
        {\small \texttt{NN+phys}}
    \end{minipage}
    \\[-8pt]
    \begin{minipage}[c]{0.84\linewidth}
        \vspace*{0pt}
        \begin{tikzpicture}
            \begin{axis}[compat=newest,
                label style={font=\scriptsize}, ylabel={$\vartheta_\text{hip}$},
                enlarge x limits=false, ymin=-0.35, ymax=0.62,
                xticklabels={,,20,40,60,80,},
                ]
                \addplot [black, thick] table [x=norm_t, y=dat1] {hm_gait/reconstruction_idx0.txt};
                \addplot [myred] table [x=norm_t, y=regPB1] {hm_gait/reconstruction_idx0.txt};
                \addplot [mybrown, thick, dashed] table [x=norm_t, y=regP1] {hm_gait/reconstruction_idx0.txt};
            \end{axis}
        \end{tikzpicture}
        \begin{tikzpicture}
            \begin{axis}[compat=newest,
                label style={font=\scriptsize}, ylabel={$\vartheta_\text{knee}$},
                enlarge x limits=false, ymin=-0.5, ymax=1.05,
                xticklabels={,,20,40,60,80,},
                legend entries={Truth},
                legend style={at={(0.03,0.05)}, anchor=south west, nodes={scale=0.6, transform shape}, inner sep=0pt},
                legend image code/.code={
                    \draw[] 
                    plot coordinates {
                        (0cm,0cm)
                        (0.15cm,0cm)        
                        (0.3cm,0cm)         
                    };%
                },
                ]
                \addplot [black, thick] table [x=norm_t, y=dat2] {hm_gait/reconstruction_idx0.txt};
                \addplot [myred] table [x=norm_t, y=regPB2] {hm_gait/reconstruction_idx0.txt};
                \addplot [mybrown, thick, dashed] table [x=norm_t, y=regP2] {hm_gait/reconstruction_idx0.txt};
            \end{axis}
        \end{tikzpicture}
        \begin{tikzpicture}
            \begin{axis}[compat=newest,
                label style={font=\scriptsize}, ylabel={$\vartheta_\text{ankle}$},
                enlarge x limits=false, ymin=-1.1, ymax=-0.1,
                xticklabels={,,20,40,60,80,},
                legend entries={$\operatorname{solve}[f_\P=0]$, $f_\A(\operatorname{solve}[f_\P=0])$},
                legend style={at={(0.03,0.05)}, anchor=south west, nodes={scale=0.6, transform shape}, inner sep=0pt},
                legend columns=1,
                legend image code/.code={
                    \draw[] 
                    plot coordinates {
                        (0cm,0cm)
                        (0.15cm,0cm)        
                        (0.3cm,0cm)         
                    };%
                },
                ]
                \addplot [mybrown, thick, dashed] table [x=norm_t, y=regP3] {hm_gait/reconstruction_idx0.txt};
                \addplot [myred] table [x=norm_t, y=regPB3] {hm_gait/reconstruction_idx0.txt};
                \addplot [black, thick] table [x=norm_t, y=dat3] {hm_gait/reconstruction_idx0.txt};
            \end{axis}
        \end{tikzpicture}
    \end{minipage}
    \hfill
    \begin{minipage}[c]{0.14\linewidth}
        \vspace*{0pt}
        {\small \texttt{NN+phys+reg}}
    \end{minipage}
    \caption{Reconstruction of a test sample of the gait data. Horizontal axis is normalized time.}
    \label{fig:hm_gait_rec}
    \vspace*{-2ex}
\end{figure}

\section{Conclusion}

Physics-integrated VAEs by construction attain partial interpretability as some of the latent variables are semantically grounded to the physics models, and thus we can generate signals in a controlled manner.
Moreover, they have extrapolation capability due to the physics models.
In this work, we proposed a regularized learning objective for ensuring a proper functionality of the integrated physics models.
We empirically validated the aforementioned unique capability of physics-integrated VAEs and the importance of the proposed regularization method.
In future studies, it would be interesting to investigate possibility and extension to learn a hybrid generative model with a highly complex observation process.

\begin{ack}
This work was supported by the Innosuisse project \emph{Industrial artificial intelligence for intelligent machines and manufacturing digitalization} (39453.1 IP-ICT) and the Swiss National Science Foundation Sinergia project \emph{Modeling pathological gait resulting from motor impairments} (CRSII5\_177179).
\end{ack}

\bibliographystyle{abbrvnat}
\bibliography{main,books,additional,add2}

\appendix

\section{General description of physics-integrated VAEs}
\label{general}

In this section, we provide a general description of the physics-integrated VAEs and the proposed regularization method, since we only described a simple case in Sections~2 and 3 of the main text.
The main difference of the general description from the simple one is the number of trainable function $f_\A$ in the model.

\subsection{Model}

We here consider a generalized case in which we have multiple trainable models $f_{\A,1}, f_{\A,2}, \dots, f_{\A,K}$.
We fix the number of $f_\P$ to be one as in the main text for clarity, while an extension in this regard is straightforward.
We exemplify some use cases with multiple $f_\A$'s in Appendix~\ref{morerel}.


\subsubsection{Latent variables}

Beside $\bm{z}_\P\in\mathcal{Z}_\P$, we consider $\bm{z}_{\A,k} \in \mathcal{Z}_{\A,k}$ for $k=1,\dots,K$.
If $f_{\A,k}$ does not take $\bm{z}$ as argument for some $k$, we simply suppose $\mathcal{Z}_{\A,k}=\emptyset$ for such $k$.
Otherwise, we suppose that $\mathcal{Z}_{\A,k}$ is (some subset of) the Euclidean space for simplicity of discussion.
The prior distributions are:
\begin{equation}\label{eq:prior_P}
    p(\bm{z}_\P) \coloneqq \mathcal{N}(\bm{z}_\P \mid \bm{m}_\P, v^2_\P \bm{I}),
\end{equation}
and
\begin{equation}\label{eq:prior_A}
    p(\bm{z}_{\A,k}) \coloneqq \mathcal{N}(\bm{z}_{\A,k} \mid \bm{0}, \bm{I}),
\end{equation}
for $k$ whose $\mathcal{Z}_{\A,k}$ is not empty.

\subsubsection{Decoder}

We intentionally do not specify the ranges and the domains of $f_\P$ and $f_{\A,1}, f_{\A,2}, \dots, f_{\A,K}$ because they depend on how these functions are connected each other.
We denote the decoding process again with a functional $\mathcal{F}$ whose arguments are $f_\P$ and $f_{\A,1}, \dots, f_{\A,K}$ as well as $\bm{z}$'s, that is, $\mathcal{F}[ f_\P, f_{\A,1}, \dots, f_{\A,K}; \bm{z}_\P, \bm{z}_{\A,1},\dots,\bm{z}_{\A,K} ]$\footnote{Note that the expression in Section~2 of the main text, $\mathcal{F}[f_\A(f_\P(\bm{z}_\P), \bm{z}_\A)]$, violates this general notation; for consistency, it should have been $\mathcal{F}[f_\P, f_\A; \bm{z}_\P, \bm{z}_\A]$ instead. The idea there was to emphasize the fact that $f_\A$ and $f_\P$ are \emph{somehow} (not only additively) composited in the model.}.
Inside $\mathcal{F}$ the functions can be connected in various ways; $\mathcal{F}$ can include 1) \emph{in-equation} augmentation $\operatorname{solve}(f_\P + f_\A = 0)$ or $\operatorname{solve}(f_\A \circ f_\P = 0)$, 2) \emph{out-equation} augmentation $f_\A(\operatorname{solve}(f_\P = 0))$, and 3) their arbitrary combinations, e.g., $f_{\A,3}(\operatorname{solve}(f_{\A,2}(f_\P + f_{\A,1}) = 0))$.
We show some examples in Appendix~\ref{morerel}.
The observation model is
\begin{equation}\label{eq:dec_general}
    p_\theta(\bm{x} \mid \bm{z}_\P, \bm{z}_{\A,1},\dots,\bm{z}_{\A,K}) \coloneqq \mathcal{N} \big( \bm{x} \mid \mathcal{F}[ f_\P, f_{\A,1}, \dots, f_{\A,K}; \bm{z}_\P, \bm{z}_{\A,1},\dots,\bm{z}_{\A,K} ], \bm\Sigma_x \big),
\end{equation}
where $\theta$ is the set of trainable parameters of $f_\P$ and $f_{\A,1},\dots,f_{\A,K}$ (and $\bm\Sigma_x$).

\subsubsection{Encoder}

Accordingly, the approximated posterior is
\begin{equation}\label{eq:enc_general}
    q_\psi(\bm{z}_\P, \bm{z}_{\A,1},\dots,\bm{z}_{\A,K} \mid \bm{x})
    \coloneqq q_\psi (\bm{z}_{\A,1},\dots,\bm{z}_{\A,K} \mid \bm{x}) q_\psi (\bm{z}_\P \mid \bm{x}, \bm{z}_{\A,1},\dots,\bm{z}_{\A,K}).
\end{equation}
We do not specify further structures of $q_\psi (\bm{z}_{\A,1},\dots,\bm{z}_{\A,K} \mid \bm{x})$ and  $q_\psi (\bm{z}_\P \mid \bm{x}, \bm{z}_{\A,1},\dots,\bm{z}_{\A,K})$ because they depend on use cases.
We denote the recognition networks for $\bm{z}_\P$ and $\bm{z}_{\A,k}$ by $g_\P$ and $g_{\A,k}$, respectively for $k=1,\dots,K$.
$\psi$ is again the set of all the trainable parameters in the encoder side of the model.

\subsection{Regularizers}

We slightly modify the definition of the proposed regularizers in accordance with the general description of the model.

The regularizer to suppress trainable components, $R_\text{PPC}$, should be able to measure the contribution of all the trainable components, $f_{\A,1},\dots,f_{\A,K}$.
While the original definition in Section~3 of the main text would still work as is, we empirically found that the following modification was useful in some cases.
The idea is to consider the \emph{marginal contribution} (compared to the physics model) of \emph{each} of the trainable components, $f_{\A,1},\dots,f_{\A,K}$, instead of computing the contribution of all $f_\A$'s altogether.
To show the essence of the idea, let us suppose $K=2$.
We consider the discrepancy between posterior predictive distributions for the following combinations:
\begin{gather}
    D_\mathrm{KL} \big[ p_{\theta,\psi}(\tilde{\bm{x}} \mid X) \Mid p_{\theta^\mathrm{r},\psi}^{\mathrm{r},\{1\}}(\tilde{\bm{x}} \mid X) \big],
    \label{eq:pcc_1}
    \\
    D_\mathrm{KL} \big[ p_{\theta,\psi}(\tilde{\bm{x}} \mid X) \Mid p_{\theta^\mathrm{r},\psi}^{\mathrm{r},\{2\}}(\tilde{\bm{x}} \mid X) \big],
    \label{eq:pcc_2}
    \\
    D_\mathrm{KL} \big[ p_{\theta^\mathrm{r},\psi}^{\mathrm{r},\{1\}}(\tilde{\bm{x}} \mid X) \Mid p_{\theta^\mathrm{r},\psi}^{\mathrm{r},\{1,2\}}(\tilde{\bm{x}} \mid X) \big],
    \label{eq:pcc_3}
    \\
    D_\mathrm{KL} \big[ p_{\theta^\mathrm{r},\psi}^{\mathrm{r},\{2\}}(\tilde{\bm{x}} \mid X) \Mid p_{\theta^\mathrm{r},\psi}^{\mathrm{r},\{1,2\}}(\tilde{\bm{x}} \mid X) \big],
    \label{eq:pcc_4}
\end{gather}
where $p_{\theta^\mathrm{r},\psi}^{\mathrm{r},\mathcal{I}}(\tilde{\bm{x}} \mid X)$ ($I \subseteq \{1,\dots,K\}$) is a \emph{partial} physics-only reduced model in which $f_{A,i}, \forall i \in \mathcal{I}$ are replaced with baseline function $h_{\A,i}$.
We let $p_{\theta^\mathrm{r},\psi}^{\mathrm{r},\mathcal{I}=\emptyset}(\tilde{\bm{x}} \mid X) \coloneqq p_{\theta,\psi}(\tilde{\bm{x}} \mid X)$ for convenience of notation.

Let us denote the upper bounds (see Proposition~1) of Eqs.~\eqref{eq:pcc_1}--\eqref{eq:pcc_4} respectively as follows:
\begin{gather*}
    \mathbb{E}_{p_\mathrm{d}(\bm{x} \mid X)}\hat{D}_{\emptyset,\{1\}}(\theta,\operatorname{param}(h),\psi;\bm{x}),
    \\
    \mathbb{E}_{p_\mathrm{d}(\bm{x} \mid X)}\hat{D}_{\emptyset,\{2\}}(\theta,\operatorname{param}(h),\psi;\bm{x}),
    \\
    \mathbb{E}_{p_\mathrm{d}(\bm{x} \mid X)}\hat{D}_{\{1\},\{1,2\}}(\theta,\operatorname{param}(h),\psi;\bm{x}),
    \\
    \mathbb{E}_{p_\mathrm{d}(\bm{x} \mid X)}\hat{D}_{\{2\},\{1,2\}}(\theta,\operatorname{param}(h),\psi;\bm{x}).
\end{gather*}
Then, the regularizer is defined as
\begin{equation}\begin{aligned}
    &4R_\text{PPC}(\theta,\operatorname{param}(h),\psi)
    \\
    &\quad\coloneqq
        \mathbb{E}_{p_\mathrm{d}(\bm{x} \mid X)}\hat{D}_{\emptyset,\{1\}}(\theta,\operatorname{param}(h),\psi;\bm{x})
        + \mathbb{E}_{p_\mathrm{d}(\bm{x} \mid X)}\hat{D}_{\emptyset,\{2\}}(\theta,\operatorname{param}(h),\psi;\bm{x})
    \\
    &\qquad
        + \mathbb{E}_{p_\mathrm{d}(\bm{x} \mid X)}\hat{D}_{\{1\},\{1,2\}}(\theta,\operatorname{param}(h),\psi;\bm{x})
        + \mathbb{E}_{p_\mathrm{d}(\bm{x} \mid X)}\hat{D}_{\{2\},\{1,2\}}(\theta,\operatorname{param}(h),\psi;\bm{x}).
\end{aligned}\end{equation}

The regularizer to use physics-based data augmentation, $R_\text{DA}$, is defined in almost the same way as in the simple case --- we draw samples $\bm{z}^\star_\P$ from some distribution of $\bm{z}_\P$ and generate physics-only augmentation by $\bm{x}^\mathrm{r}(\bm{z}^\star_\P) \coloneqq \mathcal{F}[ f_\P, h_{\A,1}, \dots, h_{\A,K}; \bm{z}_\P^\star ]$.
Note that all of $f_\A$'s are replaced with $h_\A$'s at once unlike the aforementioned case of $R_\text{PPC}$.
\section{Proof of Proposition~1}
\label{proof}


We use the following well-known facts in deriving the upper bound in Proposition~1.

\begin{lemma}
    Let $p_1(x,y)$ and $p_2(x,y)$ be two joint distributions on random variables $x$ and $y$, and $p_1(x)$ and $p_2(x)$ be the corresponding marginals.
    Then,
    \begin{equation}\label{eq:klub}
        D_\mathrm{KL}[ p_1(x) \Mid p_2(x) ] \leq D_\mathrm{KL}[ p_1(x,y) \Mid p_2(x,y) ].
    \end{equation}
\end{lemma}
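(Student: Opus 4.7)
The plan is to derive the bound from the chain rule for KL divergence, which expresses the joint KL as the sum of the marginal KL and an averaged conditional KL; since the latter is non-negative, dropping it gives the inequality.

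Concretely, I would first factorize $p_i(x,y) = p_i(x)\,p_i(y\mid x)$ for $i=1,2$ via the product rule. Substituting into the definition of $D_\mathrm{KL}[p_1(x,y) \Mid p_2(x,y)]$ and splitting the logarithm yields
\begin{equation*}
    D_\mathrm{KL}[p_1(x,y) \Mid p_2(x,y)]
    = \mathbb{E}_{p_1(x,y)}\!\left[\log \tfrac{p_1(x)}{p_2(x)}\right]
    + \mathbb{E}_{p_1(x,y)}\!\left[\log \tfrac{p_1(y\mid x)}{p_2(y\mid x)}\right].
\end{equation*}
Next I would simplify each term: the first collapses to $D_\mathrm{KL}[p_1(x) \Mid p_2(x)]$ after integrating out $y$ under $p_1(y\mid x)$, while the second equals $\mathbb{E}_{p_1(x)} D_\mathrm{KL}[p_1(y\mid x) \Mid p_2(y\mid x)]$ by the tower property. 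Non-negativity of KL divergence, which follows from Jensen's inequality applied to the convex function $-\log$, implies the second term is $\geq 0$, and discarding it gives \eqref{eq:klub}.

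The ``hard'' part is essentially nonexistent, but there is one measure-theoretic hygiene point worth noting: absolute continuity $p_1 \ll p_2$ must hold for the joints (otherwise the right-hand side is $+\infty$ and the statement is vacuous, which is consistent with the claim), and the conditionals $p_i(y\mid x)$ should be interpreted as regular conditional probabilities with the standard convention $0\log(0/0)=0$. A clean alternative that bypasses conditionals altogether is to apply the log-sum inequality pointwise in $x$ to obtain $\int p_1(x,y)\log\frac{p_1(x,y)}{p_2(x,y)}\,\mathrm{d}y \geq p_1(x)\log\frac{p_1(x)}{p_2(x)}$, and then integrate over $x$; this delivers the bound in one line and avoids introducing conditionals entirely.
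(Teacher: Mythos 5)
Your proof is correct and follows essentially the same route as the paper: factorize the joints via the chain rule, identify the joint KL as the marginal KL plus an expected conditional KL, and drop the non-negative conditional term. The additional remarks on absolute continuity and the log-sum-inequality shortcut are fine but not needed.
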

\begin{proof}
    From definition,
    \begin{equation*}\begin{aligned}
        D_\mathrm{KL} [ p_1(x,y) \Mid p_2(x,y) ]
        &= \int p_1(x,y) \frac{p_1(x,y)}{p_2(x,y)} \mathrm{d}x \mathrm{d}y
        \\
        &= \int p_1(y \mid x) p_1(x) \frac{p_1(y \mid x) p_1(x)}{p_2(y \mid x) p_2(x)} \mathrm{d}x \mathrm{d}y
        \\
        &= \int p_1(y \mid x) p_1(x) \frac{p_1(y \mid x)}{p_2(y \mid x)} \mathrm{d}x \mathrm{d}y
        + \int p_1(y \mid x) p_1(x) \frac{p_1(x)}{p_2(x)} \mathrm{d}x \mathrm{d}y
        \\
        &= \int p_1(x) \left( \int p_1(y \mid x) \frac{p_1(y \mid x)}{p_2(y \mid x)} \mathrm{d}y \right) \mathrm{d}x
        + \int p_1(x) \frac{p_1(x)}{p_2(x)} \mathrm{d}x
        \\
        &= \mathbb{E}_{p_1(x)} D_\mathrm{KL} [ p_1(y \mid x) \Mid p_2(y \mid x) ] + D_\mathrm{KL} [ p_1(x) \Mid p_2(x) ].
    \end{aligned}\end{equation*}
    Hence, from the nonnegativity of the KL divergence, we have
    \begin{equation*}\begin{aligned}
        D_\mathrm{KL} [ p_1(x) \Mid p_2(x) ]
        &= D_\mathrm{KL} [ p_1(x,y) \Mid p_2(x,y) ] - \mathbb{E}_{p_1(x)} D_\mathrm{KL} [ p_1(y \mid x) \Mid p_2(y \mid x) ]
        \\
        &\leq D_\mathrm{KL} [ p_1(x,y) \Mid p_2(x,y) ].
    \end{aligned}\end{equation*}
\end{proof}

\begin{lemma}
    Let $x$ and $y$ be random variables with joint distribution $q(x,y)$.
    Let $I(x; y)$ be the mutual information between $x$ and $y$, i.e.: $I(x; y) \coloneqq D_\mathrm{KL} [ q(x,y) \Mid q(x) q(y) ]$.
    Let $p(x)$ be some distribution of $x$.
    Then,
    \begin{equation}\label{eq:miub}
        I(x; y) \leq \mathbb{E}_{q(y)} D_\mathrm{KL} \big[ q(x \mid y) \Mid p(x) \big].
    \end{equation}
\end{lemma}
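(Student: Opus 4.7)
The plan is to rewrite both sides so that the difference collapses to a nonnegative KL divergence, leveraging the standard identity that expresses mutual information as an expected KL.

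First I would recall that the mutual information admits the equivalent expression
\begin{equation*}
    I(x;y) = D_{\mathrm{KL}}[q(x,y) \Mid q(x)q(y)] = \mathbb{E}_{q(y)} D_{\mathrm{KL}}\big[q(x\mid y) \Mid q(x)\big],
\end{equation*}
which follows by writing $q(x,y) = q(y)\,q(x\mid y)$ inside the KL and factoring out the integral over $y$.

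Next I would subtract the target upper bound from this expression and aim to identify the remainder as a KL divergence. Concretely,
\begin{equation*}
    \mathbb{E}_{q(y)} D_{\mathrm{KL}}\big[q(x\mid y) \Mid p(x)\big] - I(x;y)
    = \mathbb{E}_{q(x,y)} \log \frac{q(x\mid y)}{p(x)} - \mathbb{E}_{q(x,y)} \log \frac{q(x\mid y)}{q(x)}
    = \mathbb{E}_{q(x,y)} \log \frac{q(x)}{p(x)}.
\end{equation*}
Since the integrand does not depend on $y$, marginalizing gives $\mathbb{E}_{q(x)} \log \frac{q(x)}{p(x)} = D_{\mathrm{KL}}[q(x) \Mid p(x)]$, which is nonnegative. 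Rearranging yields the claimed bound \eqref{eq:miub}.

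There is essentially no obstacle here: the only subtlety is to ensure the two logarithmic terms combine properly before taking the expectation, which is just linearity. One could equally view the statement as a consequence of the variational characterization of the marginal, namely that $q(x)$ is the minimizer over $p(x)$ of $\mathbb{E}_{q(y)} D_{\mathrm{KL}}[q(x\mid y) \Mid p(x)]$, with minimum value equal to $I(x;y)$; any other $p(x)$ therefore yields a value at least as large.
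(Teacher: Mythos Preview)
Your proof is correct and follows essentially the same route as the paper: both derive the identity $\mathbb{E}_{q(y)} D_{\mathrm{KL}}[q(x\mid y)\Mid p(x)] = I(x;y) + D_{\mathrm{KL}}[q(x)\Mid p(x)]$ and then invoke nonnegativity of the last term. The only cosmetic difference is that the paper inserts $p(x)$ by multiplying and dividing inside the log before splitting, whereas you first write $I(x;y)$ as $\mathbb{E}_{q(y)} D_{\mathrm{KL}}[q(x\mid y)\Mid q(x)]$ and then subtract; the algebra is the same.
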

\begin{proof}
    From the nonnegativity of the KL divergence,
    \begin{equation*}\begin{aligned}
        I(x,y)
        &= D_\mathrm{KL} [ q(x,y) \Mid q(x) q(y) ]
        \\
        &= \int q(x,y) \log \frac{q(x,y)}{q(x) q(y)} \mathrm{d}x \mathrm{d}y
        \\
        &= \int q(x,y) \log\frac{q(x \mid y)}{q(x)} \mathrm{d}x \mathrm{d}y
        \\
        &= \int q(x,y) \log\frac{q(x \mid y) p(x)}{p(x) q(x)} \mathrm{d}x \mathrm{d}y
        \\
        &= \mathbb{E}_{q(y)} D_\mathrm{KL} \big[ q(x \mid y) \Mid p(x) \big] - D_\mathrm{KL} \big[ q(x) \Mid p(x) \big]
        \\
        &\leq \mathbb{E}_{q(y)} D_\mathrm{KL} \big[ q(x \mid y) \Mid p(x) \big].
    \end{aligned}\end{equation*}
\end{proof}

Now we give a proof of Proposition~1.
\begin{proof}[Proof of Proposition~1]
    Let us denote the set of $\bm{z}_\P$ and $\bm{z}_\A$ by $z$.
    As a posterior predictive distribution $p(\tilde{\bm{x}} \mid X)$ is obtained by marginalizing out $z$ and $\bm{x}$ of joint distribution $p(\tilde{\bm{x}}, z, \bm{x} \mid X)$, from \eqref{eq:klub},
    \begin{equation}\label{eq:klub_apply}
         D_\mathrm{KL} \big[ p_{\theta,\psi}(\tilde{\bm{x}} \mid X) \Mid p_{\theta^\mathrm{r},\psi}^\mathrm{r}(\tilde{\bm{x}} \mid X) \big]
         \leq
         D_\mathrm{KL} \big[ p_{\theta,\psi}(\tilde{\bm{x}}, z, \bm{x} \mid X) \Mid p_{\theta^\mathrm{r},\psi}^\mathrm{r}(\tilde{\bm{x}}, z, \bm{x} \mid X) \big].
    \end{equation}
    The right-hand side of \eqref{eq:klub_apply} is
    \begin{equation*}\begin{aligned}
        &
        D_\mathrm{KL} \big[ p_{\theta,\psi}(\tilde{\bm{x}}, z, \bm{x} \mid X) \Mid p_{\theta^\mathrm{r},\psi}^\mathrm{r}(\tilde{\bm{x}}, z, \bm{x} \mid X) \big]
        \\&\quad
        = D_\mathrm{KL} \Big[
            p_\theta (\tilde{\bm{x}} \mid z)
            q_\psi (z \mid \bm{x})
            p_\mathrm{d}(\bm{x} \mid X)
        \,\Big\Vert\,
            p_{\theta^\mathrm{r}}^\mathrm{r} (\tilde{\bm{x}} \mid z)
            q_\psi^\mathrm{r} (z \mid \bm{x})
            p_\mathrm{d}(\bm{x} \mid X)
        \Big]
        \\&\quad
        =
            \mathbb{E}_{p_\mathrm{d}(\bm{x} \mid X)} \mathbb{E}_{q_\psi (z \mid \bm{x})} D_\mathrm{KL} \big[ p_\theta (\tilde{\bm{x}} \mid z) \Mid p_{\theta^\mathrm{r}}^\mathrm{r} (\tilde{\bm{x}} \mid z) \big]
            + \mathbb{E}_{p_\mathrm{d}(\bm{x} \mid X)} D_\mathrm{KL} \big[ q_\psi (z \mid \bm{x}) \Mid q_\psi^\mathrm{r} (z \mid \bm{x}) \big],
    \end{aligned}\end{equation*}
    where the last term is
    \begin{equation*}\begin{aligned}
        &
        \mathbb{E}_{p_\mathrm{d}(\bm{x} \mid X)} D_\mathrm{KL} \big[ q_\psi (z \mid \bm{x}) \Mid q_\psi^\mathrm{r} (z \mid \bm{x}) \big]
        \\
        &\quad
        = \mathbb{E}_{p_\mathrm{d}(\bm{x} \mid X)} D_\mathrm{KL} \big[ q_\psi (\bm{z}_\P \mid \bm{x}, \bm{z}_\A) q_\psi(\bm{z}_\A \mid \bm{x}) \Mid q_\psi (\bm{z}_\P \mid \bm{x}) p(\bm{z}_\A) \big]
        \\
        &\quad
        = \mathbb{E}_{p_\mathrm{d}(\bm{x} \mid X)} \Big[ \mathbb{E}_{q_\psi (\bm{z}_\A \mid \bm{x})} D_\mathrm{KL} \big[ q_\psi (\bm{z}_\P \mid \bm{x}, \bm{z}_\A) \Mid q_\psi (\bm{z}_\P \mid \bm{x}) \big] + D_\mathrm{KL} \big[ q_\psi(\bm{z}_\A \mid \bm{x}) \Mid p(\bm{z}_\A) \big] \Big]
        \\
        &\quad
        = \mathbb{E}_{p_\mathrm{d}(\bm{x} \mid X)} \Big[ I(\bm{z}_\P; \bm{z}_\A) + D_\mathrm{KL} \big[ q_\psi(\bm{z}_\A \mid \bm{x}) \Mid p(\bm{z}_\A) \big] \Big].
    \end{aligned}\end{equation*}
    Hence, from the upper bound of mutual information, \eqref{eq:miub}, the right-hand side of \eqref{eq:klub_apply} is further upper bounded as 
    \begin{equation*}\begin{aligned}
        &
        D_\mathrm{KL} \big[ p_{\theta,\psi}(\tilde{\bm{x}}, z, \bm{x} \mid X) \Mid p_{\theta^\mathrm{r},\psi}^\mathrm{r}(\tilde{\bm{x}}, z, \bm{x} \mid X) \big]
        \\&\quad
        \leq \mathbb{E}_{p_\mathrm{d}(\bm{x} \mid X)} \Big[
            \mathbb{E}_{q_\psi (z \mid \bm{x})} D_\mathrm{KL} \big[ p_\theta (\tilde{\bm{x}} \mid z) \Mid p_{\theta^\mathrm{r}}^\mathrm{r} (\tilde{\bm{x}} \mid \bm{z}_\P, \bm{z}_\A) \big]
            \\&\qquad\quad
            + \mathbb{E}_{q_\psi(\bm{z}_\A \mid \bm{x})} D_\mathrm{KL} \big[ q_\psi(\bm{z}_\P \mid \bm{x}, \bm{z}_\A) \Mid p(\bm{z}_\P) \big]
            + D_\mathrm{KL} \big[ q_\psi(\bm{z}_\A \mid \bm{x}) \Mid p(\bm{z}_\A) \big]
        \Big].
    \end{aligned}\end{equation*}
\end{proof}
\section{Additional remarks on the regularized learning method}
\label{remark}

\paragraph{Upper bound of KL in general case}
In the general case of Appendix~\ref{general}, the upper bound of the KL divergence used for defining $R_\text{PPC}$ becomes slightly different.
For example, a bound of \eqref{eq:pcc_1} is as follows (recall that we focused the case of $K=2$ for discussion):
\begin{multline*}
    D_\mathrm{KL} \big[ p_{\theta,\psi}(\tilde{\bm{x}} \mid X) \Mid p_{\theta^\mathrm{r},\psi}^{\mathrm{r},\{1\}}(\tilde{\bm{x}} \mid X) \big]
    \leq
    \mathbb{E}_{p_\mathrm{d}(\bm{x} \mid X)} \Big[
        \mathbb{E}_{q_\psi(\bm{z}_\P, \bm{z}_\A \mid \bm{x})} D_\mathrm{KL} [ p_\theta \Mid p_\theta^{\mathrm{r},\{1\}} ]
        \\
        + D_\mathrm{KL} [ q_\psi(\bm{z}_{\A,1}, \bm{z}_{\A,2} \mid \bm{x}) \Mid p_{\A,\{1,2\}} ]
        + \mathbb{E}_{q_\psi(\bm{z}_{\A,1}, \bm{z}_{\A,2} \mid \bm{x})} D_\mathrm{KL} [ q_\psi(\bm{z}_\P \mid \bm{z}_{\A,1}, \bm{z}_{\A,2}, \bm{x}) \Mid p_\P ]
    \Big],
\end{multline*}
where $p_{\A,\{1,2\}}$ is some distribution of $\bm{z}_{\A,1}$ and $\bm{z}_{\A,2}$, for example $p_{\A,\{1,2\}}=p(\bm{z}_{\A,1})p(\bm{z}_{\A,2})$ using priors.
This upper bound can be derived analogously to Proposition~1.

\paragraph{Interpretation of upper bound}
It is interesting that the mutual information $I(\bm{z}_\P; \bm{z}_\A)$ appears in the intermediate bound of $D_\mathrm{KL} \big[ p_{\theta,\psi}(\tilde{\bm{x}} \mid X) \Mid p_{\theta^\mathrm{r},\psi}^\mathrm{r}(\tilde{\bm{x}} \mid X) \big]$ (see the proof of Proposition~1).
Such a mutual information becomes a conditional mutual information (e.g., $I(\bm{z}_\P; \bm{z}_{\A,1} \mid \bm{z}_{\A,2})$) in the general case.
Moreover, the last two terms of the upper bound in Proposition~1 are the same as the last two terms of the ELBO when $p_\P$ and $p_\A$ are the priors.
In such a case, adding them as regularizers to the objective is equivalent to what is done in $\beta$-VAE \citep{higginsVVAELearningBasic2017}.
It would also be interesting to discuss connection with the work by \citet{zhaoInformationAutoencodingFamily2018}.

\paragraph{Usage of augmented data}
Data augmented with physics-based prior knowledge can also be used for pretraining (e.g., \citet{jiaPhysicsGuidedRNNs2019}).
We rather generate and use them during the main training procedure as regularizers because the effects of pretraining may diminish in the main training.

\section{Related work}
\label{morerel}

We introduce related studies that could not be in Section~4 of the main text due to length limit.
Recall that in Section~4, we reviewed the studies with the following two perspectives: ``Physics+ML in model design'' and ``Physics+ML in objective design.''
In this appendix, we follow a slightly different taxonomy: 1 ) \emph{physics-integrated}, 2) \emph{physics-informed}, and 3) \emph{physics-inspired} methods.
The first two of these three roughly correspond to the two perspectives in Section~4 of the main text.
In contrast, we did not focus on the last one, physics-inspired method, in Section~4, while it will be informative for readers to provide a broader view of the context.
We refer to some reviews and surveys on these topics, such as ones by \citet{willardIntegratingPhysicsbasedModeling2020,vonruedenInformedMachineLearning2020,von_rueden_combining_2020,beckh_explainable_2021}, and \citet{karniadakis_physics-informed_2021}.
We would like to emphasize that the aforementioned three areas of research are never exclusive, and study that can bridge and unify them will be important.

\subsection{Physics-integrated methods}

We refer to methods where the model is a combination of physics models and machine learning models as \emph{physics-integrated}\footnote{Though this has been traditionally known as gray-box modeling, here we put an emphasis on the focus on physics-based models and adjust the wording with other related perspectives.} ones.
As such an approach was already explained to some extent in Section~4 of the main text, we here focus on exemplifying architectures of physics-integrated models.
Most of the studies referred to here did not aim generative modeling originally, though the ideas can be fitted to our general architecture of physics-integrated VAEs.
For more information, we recommend consulting the excellent survey / overview papers \citep[e.g.,][]{thompsonModelingChemicalProcesses1994,karpatneTheoryguidedDataScience2017,renLearningWeakSupervision2018,reichstein_deep_2019,wangIntegratingModeldrivenDatadriven2019,camps-vallsLivingPhysicsMachine2020,shlezingerModelbasedDeepLearning2020,willardIntegratingPhysicsbasedModeling2020,sch21}.

\paragraph{In-equation augmentation}
A numerical solver of dynamics models such as ODEs, PDEs, and discrete-time difference equations are one of the most prevailing forms of an equation-solving process that can be in a physics-integrated VAE.
In such cases, $f_\P$ and/or $f_\A$ would give terms that appear in a dynamics equation.
They are combined additively in many cases \citep{rico-martinezContinuoustimeNonlinearSignal1994,thompsonModelingChemicalProcesses1994,reinhartHybridAnalyticalDatadriven2017,golemoSimtorealTransferNeuralaugmented2018,mehtaNeuralDynamicalSystems2020,dechelleBridgingDynamicalModels2020,roehrlModelingSystemDynamics2020,leguenDisentanglingPhysicalDynamics2020,leguenAugmentingPhysicalModels2021,vianaEstimatingModelInadequacy2021,mitusch_hybrid_2021,qian_integrating_2021}, for example:
\begin{equation}
    \mathcal{F} \coloneqq \operatorname{solve}_y \big[ f_\P(y,\bm{z}_\P) + f_\A(y,\bm{z}_\A) = 0 \big],
\end{equation}
where $\operatorname{solve}_y$ refers to a numerical ODE/PDE solver with regard to $y$ and returns the value of the solution on some time/space grid.
Another way of combining $f_\P$ and $f_\A$ in this context is composition \citep{psichogiosHybridNeuralNetworkfirst1992,thompsonModelingChemicalProcesses1994,longHybridNetIntegratingModelbased2018,wanDataassistedReducedorderModeling2018,lanusseHybridPhysicaldeepLearning2019,debezenacDeepLearningPhysical2019,mohanEmbeddingHardPhysical2020,mateiInterpretableMachineLearning2020,behjatPhysicsawareLearningArchitecture2020,arikInterpretableSequenceLearning2020,liKohnShamEquationsRegularizer2020,jiangSimGANHybridSimulator2021,karra_adjointnet_2021,djeumou_neural_2021}, for example:
\begin{equation}
    \mathcal{F} \coloneqq \operatorname{solve}_y \big[ f_\P(y, \bm{z}_\P, f_\A(y,\bm{z}_\A)) = 0 \big],
\end{equation}
where $f_\A$ often gives estimation of some unknown or varying physics parameters in $f_\P$.
The order of the composition may reverse \citep[recent examples include][]{ajayAugmentingPhysicalSimulators2018,ajayCombiningPhysicalSimulators2019}, that is,
\begin{equation}
    \mathcal{F} \coloneqq \operatorname{solve}_y \big[ f_\A(y, \bm{z}_\A, f_\P(y,\bm{z}_\P)) = 0 \big],
\end{equation}
where the output of a physics model is augmented by a machine learning model.
Such a mechanism is often called \emph{residual physics}.
Some studies consider more complex combinations of $f_\P$ and $f_\A$, for example, $\mathcal{F} \coloneqq \operatorname{solve} [f_{\P,2}( f_\A(f_{\P,1}) ) = 0]$ \citep{raissiDeepHiddenPhysics2018,degrooteNeuralNetworkAugmented2019,heidenNeuralSimAugmentingDifferentiable2020,kaltenbachPhysicsawareProbabilisticModel2021}.
A trickier case appears in \citet{jiangDataaugmentedContactModel2018}, where discrete state of contact dynamics is first determined by a data-driven classifier, which is then used for choosing one of physics models (also including trainable ones) to be used.
Moreover, \citet{um_solver---loop_2020} considered to correct numerical errors by neural nets inside a differentiable solver of differential equations.

The equation-solving process can be anything else than an ODE/PDE solver.
If (augmented) physics models are algebraic equations with closed-form solutions, $\mathcal{F}$ just evaluates some functions \citep[e.g.,][]{aragon-calvoSelfsupervisedLearningPhysicsaware2020}.
If no closed-form solution is available, a diffentiable optimizer may be utilized in $\mathcal{F}$.

We also note that the latent force models \citep{alvarezLatentForceModels2009} are known as a principled method to incorporate physics models in differential equations into Gaussian processes.

\paragraph{Out-equation augmentation}
Physics and machine learning integration can also happen outside an equation-solving process.
The simplest case is
\begin{equation}
    \mathcal{F} \coloneqq f_\A( \operatorname{solve}[\cdots], \bm{z}_\A )
    \quad\text{or}\quad
    \mathcal{F} \coloneqq f_\A(\operatorname{solve}[\cdots], \bm{z}_\A) + \operatorname{solve}[\cdots],
\end{equation}
where $\operatorname{solve}[\cdots]$ denotes the output of some equation-solving process, which also includes $f_\P$ as well as another set of $f_\A$'s.
For example, such architectures can be found in the following use cases:
\begin{itemize}[itemsep=0pt,topsep=0pt,leftmargin=*]
    \item $f_\A$ corrects the output of an equation-solving process, $\operatorname{solve}[\cdots]$, to compensate inaccuracy of physics models or unmodeled phenomena \citep{youngPhysicallyBasedMachine2017,chenPGAPhysicsGuided2018,nutkiewiczDatadrivenUrbanEnergy2018,singhPILSTMPhysicsinfusedLong2019,wangIntegratingModeldrivenDatadriven2019,zengTossingBotLearningThrow2019,pitchforth_grey-box_2021}. This can also be seen as residual physics.
    \item $f_\A$ works as an observation function that changes signal's modality \citep{greydanusHamiltonianNeuralNetworks2019,lutterDeepLagrangianNetworks2019,yildizODE2VAEDeepGenerative2019,linialGenerativeODEModeling2020,tothHamiltonianGenerativeNetworks2020,cranmerLagrangianNeuralNetworks2020,saemundssonVariationalIntegratorNetworks2020,jaques_physics-as-inverse-graphics_2020}.
    \item Output of $\operatorname{solve}[\cdots]$ is used as input features of machine learning model $f_\A$ \citep{karpatnePhysicsguidedNeuralNetworks2017,pawarPhysicsGuidedMachine2020,muralidharPhyNetPhysicsGuided2020,zhangMIDPhyNetMemorizedInfusion2021,belbute-peresCombiningDifferentiablePDE2020,pitchforth_grey-box_2021}.
\end{itemize}

In \citep{senguptaEnsemblingGeophysicalModels2020}, $f_\A$ works as the weight of ensemble of physics models, that is, \begin{equation}
    \mathcal{F} \coloneqq \sum_i f_{\A,i}(\bm{z}_{\A,i}) \cdot \operatorname{solve}[\cdots]_i.
\end{equation}

\paragraph{Inverse problems as (V)AE}
The idea of (Bayesian) inverse problems is in line with the auto-encoding variational Bayes; in inverse problems, the forward process (i.e., a decoder) is known and a corresponding backward process (i.e., an encoder) is to be estimated.
For example,
\citet{taitVariationalAutoencodingPDE2020} propose a VAE whose decoder has a structure based on the finite element method for PDEs.
\citet{aragon-calvoSelfsupervisedLearningPhysicsaware2020} replace VAE's decoder with a light distribution model of galaxies for inferring parameters of galaxy from images.
\citet{pakravanSolvingInversePDEProblems2020} integrate a PDE solver into the decoder of a VAE.
\citet{nguyen_model-constrained_2021} discuss the form of solution for a special case where physics and VAEs are with linear models.
\citet{sun_amortized_2021} use learned surrogate models as the decoder of autoencoders.
Similar problems are also discussed in the context of data assimilation \citep[see, e.g.,][]{frerix_variational_2021} and likelihood-free inference \citep[see, e.g.,][]{cranmer_frontier_2020}.

\subsection{Physics-informed methods}

We already introduced some studies in this direction, i.e., designing learning objective based on physics knowledge, in Section~4 of the main text.
We call such an approach \emph{physics-informed} after the work of \citet{raissiPhysicsinformedNeuralNetworks2019}.
As it is not our main interest in this paper, we do not repeat the contents of Section~4; please refer to Section~4, and we also recommend consulting survey papers such as \citep{karniadakis_physics-informed_2021}.
The study by \citet{wang_understanding_2021} is also notable here as they analyze the difficulty of training physics-informed neural networks and propose a remedy.


\subsection{Physics-inspired methods}

While the main interest of this work is integration of \emph{application-specific} physics models into machine learning models, it is worth noting that there are lines of studies where the aim is to design models on the basis of \emph{abstract and general} knowledge of data-generating process.
The extent of the abstraction is diverse; in some studies, it is still natural to refer to the utilized knowledge as physics-related (in a narrow sense, i.e., as one of scientific disciplines) \citep{chung_recurrent_2015,fraccaro_sequential_2016,karl_deep_2017,krishnan_structured_2017,li_disentangled_2018,longPDEnetLearningPDEs2018,longPDENetLearningPDEs2019,yildizODE2VAEDeepGenerative2019,leguenDisentanglingPhysicalDynamics2020,zhang_physics-guided_2020}, and in some other studies, the level of abstraction goes beyond that, e.g., a general model that can realize structural causal models is incorporated \citep{leeb_structured_2021}.
Hence, the heading of this subsection, \emph{physics-inspired}, may not be perfect; we stick to it just for the consistency with the other perspectives.

For example, researchers have been investigating structured generative models for sequential data, in which the structure of latent variables reflects the sequential nature of data \citep{chung_recurrent_2015,fraccaro_sequential_2016,karl_deep_2017,krishnan_structured_2017,li_disentangled_2018}.
Moreover, \citet{casale_gaussian_2018} proposed to place a Gaussian process prior in VAEs.
Note that these studies are never exclusive with the interest of our work and related ones; for example, the VAEs with sequential structures are indeed closely related to the VAEs with ODEs/PDEs \citep[e.g.,][]{yildizODE2VAEDeepGenerative2019,longPDEnetLearningPDEs2018,longPDENetLearningPDEs2019,leguenDisentanglingPhysicalDynamics2020,zhang_physics-guided_2020}, since only the major difference is whether time is discrete or continuous.
The techniques of the structured latent variable models would also be useful in physics-inspired and physics-integrated methods.

\section{Detailed experimental settings}
\label{setting}

\subsection{Infrastructure}

We implemented the models using Python 3.8.0 with PyTorch 1.7.0 and NumPy 1.19.2 throughout the experiments.
We used SciPy of version 1.5.2 in generating the synthetic datasets.
The computation was performed with a machine equipped with an NVIDIA\textsuperscript{\textregistered} Tesla\textsuperscript{\texttrademark} V100 GPU in the experiment on the galaxy images dataset.
We used a machine equipped with a CPU of Intel\textsuperscript{\textregistered} Xeon\textsuperscript{\textregistered} Gold 6148 in the other experiments.


\subsection{Forced damped pendulum}

\paragraph{Data-generating process}
We consider a gravity pendulum with damping effect and external force.
Let $\vartheta(t)$ be the angle of the pendulum at time $t$.
We generated the data by numerically integrating an ODE:
\begin{equation*}
    \frac{\mathrm{d}^2\vartheta(t)}{\mathrm{d}t^2} + \omega^2 \sin\vartheta(t) + \xi \frac{\mathrm{d}\vartheta(t)}{\mathrm{d}t} - A \omega^2 \cos(2 \pi \phi t) = 0,
\end{equation*}
using \texttt{scipy.integrate.solve\_ivp} with the explicit Runge--Kutta method of order 8.
The tolerance parameters \texttt{rtol} and \texttt{atol} were kept to be the default values, $10^{-3}$ and $10^{-6}$, respectively.
We evaluated the solution's values at timesteps $t=0, \Delta t, \cdots, (\tau-1)\Delta t$ with $\Delta t=0.05$ and $\tau=50$ using the 7-th order interpolation polynomial.
The values of the parameters, $\omega$, $\xi$, $A$, and $\phi$, as well as the initial condition $\vartheta(0)$ were randomly sampled when creating each sequence.
The random sampling was with the uniform distributions on the following ranges: $\omega \in [0.785, 3.14]$, $\xi \in [0, 0.8]$ $f \in [3.14, 6.28]$, $A \in [0, 40]$, and $\vartheta(0) \in [-1.57, 1.57]$.
The initial condition of $\dot\vartheta(0)$ was fixed to be $0$.
Each element of each generated sequence was added by zero-mean Gaussian noise with standard deviation $0.01$.

\paragraph{Data property}
The overall dataset we generated comprises 3,500 elements (data-points) in total.
Each data-point $\bm{x}$ is a sequence of length $\tau$ of pendulum's angle, that is,
\begin{equation*}
    \bm{x}_i \coloneqq [ \vartheta_i(0) \ \vartheta_i(\Delta t) \ \cdots \ \vartheta_i((\tau-1)\Delta t) ]^\tr \in \mathbb{R}^\tau,
\end{equation*}
where $i=1,\dots,3500$ is the sample index.

\paragraph{Train/valid/test split}
We first extracted 500 and 1,000 sequences randomly from the overall dataset as the validation set and the test set, respectively.
We then selected 1,000 sequences out of the remaining 2,000 sequences to make a training set.
This selection was randomly done every time; so a different random seed resulted in a different training set.

\paragraph{Physics model}
A part of the data-generating process was given as physics model: $f_\P(\vartheta, z_\P) \coloneqq \ddot{\vartheta}+z_\P\sin\vartheta$.

\paragraph{Latent variables}
By construction of $f_\P$, $z_\P\in\mathbb{R}$ is expected to work in the same manner as $\omega$ in the data-generating process.
There were also $\bm{z}_{\A,1}\in\mathbb{R}$ and $\bm{z}_{\A,2}\in\mathbb{R}^2$ in the full \texttt{NN+phys} and \texttt{NN+phys+reg} models.
Meanwhile, we used $\bm{z}_{\A,2}\in\mathbb{R}^4$ (and no $\bm{z}_{\A,1}$, $\bm{z}_\P$) in the \texttt{NN-only}; and $\bm{z}_{\A,1}\in\mathbb{R}^2$ and $\bm{z}_{\A,2}\in\mathbb{R}^2$ (and no $\bm{z}_\P)$ in the \texttt{NN+solver} model.

\paragraph{Decoder architecture}
We describe the decoder architecture of the full \texttt{NN+phys} and \texttt{NN+phys+reg} models.
In the first stage, an ODE $f_\P(\vartheta,z_\P)+f_{\A,1}(\vartheta,\bm{z}_{\A,1})=0$ is numerically solved with the Euler method for length $\tau$ with step size $\Delta t$.
Let $\bm\nu\in\mathbb{R}^\tau$ be the solution sequence.
In the second stage, $\bm\nu$ is then augmented by $f_{\A,2}$, i.e., $f_{\A,2}(\bm\nu, \bm{z}_{\A,2})$.
We modeled $f_{\A,1}$ with a multilayer perceptron (MLP) with two hidden layers of size 64.
We modeled $f_{\A,2}$ also with an MLP with two hidden layers of size 128.
We used the exponential linear unit (ELU) with its\footnote{\label{alpha}$\alpha$ here is different from one of the hyperparameters of the proposed regularizers.} $\alpha=1.0$ as activation function after the hidden layers.

\paragraph{Encoder architecture}
We describe the encoder architecture of the full \texttt{NN+phys} and \texttt{NN+phys+reg} models.
We modeled the recognition networks, $g_{\A,1}$, $g_{\A,2}$, and $g_{\P,2}$ with MLPs with five hidden layers of size 128, 128, 256, 64, and 32.
We modeled $g_{\P,1}$ as $g_{\P,1}(\bm{x},\bm{z}_{\A,1},\bm{z}_{\A,2})=\bm{x}+U(\bm{x},\bm{z}_{\A,1},\bm{z}_{\A,2})$, where $U$ was an MLP with two hidden layers of size 128.
We used ELU with its\textsuperscript{\ref{alpha}} $\alpha=1.0$ as activation function after the hidden layers.
We put a softplus function after the final output of $g_\P$ to make its output positive-valued.

\paragraph{Replacement functions}
To create the reduced models, we replaced $f_{\A,1}$ and $f_{\A,2}$ respectively by $h_{\A,1}=0$ and $h_{\A,2}=\mathrm{Id}$.

\paragraph{Hyperparameters}
We selected the hyperparameters, $\alpha$, $\beta$, and $\gamma$, from the following sets: $\alpha \in \{ 10^{-3}, 10^{-2}, 10^{-1} \}$, $\beta \in \{ 10^{-4}, 10^{-3}, 10^{-2} \}$, and $\gamma \in \{ 10^{-2}, 10^{-1}, 1 \}$,.
These ranges were chosen to roughly adjust the values of the corresponding regularizers to that of the ELBO.
The configuration that achieved the best reconstruction error on the validation set was selected finally: $\alpha=10^{-2}$, $\beta=10^{-3}$, and $\gamma=10^{-1}$.
In computing $R_{\text{DA},2}$, we sampled $z^*_\P$ from the uniform distribution on range $[0.392, 3.53]$.

\paragraph{Optimization}
We used the Adam optimizer with its\footnote{\label{alphagamma}$\alpha$ and $\gamma$ here are different from the ones of the hyperparameters of the proposed regularizers.} $\alpha=10^{-3}$, $\gamma_1=0.9$, $\gamma_2=0.999$, and $\epsilon=10^{-3}$.
We ran iterations with mini-batch size 200 for 5000 epochs (i.e., 25,000 iterations in total) and saved the model that achieved the best validation reconstruction error.


\subsection{Advection-diffusion system}

\paragraph{Data-generating process}
We consider the advection (convection) and diffusion of something (e.g., heat) on the 1-dimensional space, which is described by the following PDE:
\begin{equation*}
    \frac{\partial T(t,s)}{\partial t} - a \frac{\partial^2 T(t,s)}{\partial s^2} + b \frac{\partial T(t,s)}{\partial s} = 0,
\end{equation*}
where $t$ and $s$ denote the time and space dimension, respectively.
We numerically solved this PDE using \texttt{scipy.integrate.solve\_ivp} with the explicit Runge--Kutta method of order 8.
The spatial derivative was computed with discretization on the $H$-point even grid between $s=0$ and $s=s_\text{max}$ with $H=12$ and $s_\text{max}=2$.
We evaluated the solutions values at timesteps $t=0, \Delta t, \cdots, (\tau-1)\Delta t$ with $\Delta t=0.02$ and $\tau=50$.
The initial condition was set $T(0,s)=c\sin(\pi s / s_\text{max})$, and we set the Dirichlet boundary condition $T(t,0)=T(t,s_\text{max})=0$.
The values of the parameters $a$, $b$, and $c$ were randomly sampled when creating each sequence.
The random sampling was with the uniform distributions on the following ranges: $a \in [10^{-2}, 10^{-1}]$, $b \in [10^{-2}, 10^{-1}]$, and $c \in [0.5, 1.5]$.
Each element of each generated sequence was added by zero-mean Gaussian noise with standard deviation $0.001$.

\paragraph{Data property}
The overall dataset we generated comprises 3,500 sequences, each of which is
\begin{equation*}
    \bm{x}_i \coloneqq
    \begin{bmatrix}
        T_i(0,0) & T_i(\Delta t, 0) & \cdots & T_i((\tau-1)\Delta t, 0) \\
        \vdots & \vdots & & \vdots \\
        T_i(0,s_\text{max}) & T_i(\Delta t, s_\text{max}) & \cdots & T_i((\tau-1)\Delta t, s_\text{max})
    \end{bmatrix}
    \in \mathbb{R}^{H \times \tau}.
\end{equation*}

\paragraph{Train/valid/test split}
We first extracted 500 and 1,000 sequences randomly from the overall dataset as the validation set and the test set, respectively.
We then selected 1,000 sequences out of the remaining 2,000 sequences to make a training set.
This selection was randomly done every time; so a different random seed resulted in a different training set.

\paragraph{Physics model}
A part of the data-generating process was given as physics model: $f_\P(T,\bm{z}_\P) \coloneqq T_t - z_\P T_{ss}$.

\paragraph{Latent variables}
By construction of $f_\P$, $z_\P \in \mathbb{R}$ is expected to work in the same manner as $a$ in the data-generating process.
There was also $\bm{z}_\A\in\mathbb{R}^4$ in the full \texttt{NN+phys} and \texttt{NN+phys+reg} models.
Meanwhile, we used $\bm{z}_\A\in\mathbb{R}^5$ (and no $\bm{z}_\P$) in the \texttt{NN-only} and \texttt{NN+solver} models.

\paragraph{Decoder architecture}
We describe the decoder architecture of the full \texttt{NN+phys} and \texttt{NN+phys+reg} models.
In $\mathcal{F}$, a PDE $f_\P(T,z_\P)+f_\A=0$ was numerically solved with the finite difference method with the explicit scheme for length $\tau$ with temporal step size $\Delta t$.
We modeled $f_\A$ with an MLP with two hidden layers of size 64.
We used ELU with its\textsuperscript{\ref{alpha}} $\alpha=1.0$ as activation function after the hidden layers.
In the \texttt{NN-only} model, we modeled $f_\A$ with an MLP with a hidden layer of size 128.

\paragraph{Encoder architecture}
We describe the encoder architecture of the full \texttt{NN+phys} and \texttt{NN+phys+reg} models.
We modeled the recognition networks, $g_\A$ and $g_{\P,2}$, with MLPs with five hidden layers of size 256, 256, 256, 64, and 32.
We modeled $g_{\P,1}(\bm{x},\bm{z}_\A)$ with an MLP with two hidden layers of size 256.
We used ELU with its\textsuperscript{\ref{alpha}} $\alpha=1.0$ as activation function after the hidden layers.
We put a softplus function after the final output of $g_\P$ to make its output positive-valued.

\paragraph{Replacement functions}
To create the reduced model, we replaced $f_\A$ by $h_\A=0$.

\paragraph{Hyperparameters}
We selected the hyperparameters, $\alpha$, $\beta$, and $\gamma$, from the following sets: $\alpha \in \{ 10^{-2}, 10^{-1} \}$, $\beta \in \{ 10^{-2}, 10^{-1} \}$, and $\gamma \in \{ 10^5, 10^6 \}$.
These ranges were chosen to roughly adjust the values of the corresponding regularizers to that of the ELBO.
The configuration that achieved the best reconstruction error on the validation set was selected finally: $\alpha=10^{-1}$, $\beta=10^{-2}$, and $\gamma=10^6$.
In computing $R_{\text{DA},2}$, we sampled $z^*_\P$ from the uniform distribution on range $[0.005, 0.2]$.

\paragraph{Optimization}
We used the Adam optimizer with its\textsuperscript{\ref{alphagamma}} $\alpha=10^{-3}$, $\gamma_1=0.9$, $\gamma_2=0.999$, and $\epsilon=10^{-3}$.
We ran iterations with mini-batch size 200 for 20000 epochs (i.e., 100,000 iterations in total) and saved the model that achieved the best validation reconstruction error.


\subsection{Galaxy images}

\paragraph{Data property}
We used images of galaxies from a part of the Galaxy10 dataset\footnote{The original images are from the Sloan Digital Sky Survey \url{www.sdss.org}, and the labels are from the Galaxy Zoo project \url{www.galaxyzoo.org}. The dataset is available a part of the \texttt{astroNN} package \citep{leungDeepLearningMultielement2018}}.
We selected the 589 images of the ``Disk, Edge-on, No Bulge'' class to form an overall dataset.
Each image is of size $69 \times 69$ with three channels, so $\bm{x}_i \in \mathbb{R}^{69 \times 69 \times 3}$.
We normalized the intensity values into range $[0,1]$.

\paragraph{Train/valid/test split}
We separated the overall dataset them into training, validation, and test sets with 400, 100, and 89 images, respectively.
In training, we performed data augmentation with random vertical/horizontal flips and random rotation, and thus the size of the training set was 8,000.

\paragraph{Physics model}
The physics model $f_\P \colon \mathbb{R}^{4} \to \mathbb{R}^{69 \times 69}$ is an exponential profile of the light distribution of galaxies whose input is $\bm{z}_\P \coloneqq [I_0 \ A \ B \ \vartheta]^\tr \in \mathbb{R}_{>0}^4$, whose elements have the semantics introduced in the following.
Let $[f_\P]_{i,j}$ denote the $(i,j)$-element of the output of $f_\P$.
Then, for $1 \leq i,j \leq 69$,
\begin{equation*}
    [f_\P]_{i,j} = I_0 \exp(-r_{i,j}),
\end{equation*}
where
\begin{gather*}
    r_{i,j}^2 \coloneqq \frac{(\mathsf{X}_j\cos\vartheta - \mathsf{Y}_i\sin\vartheta)^2}{A^2} + \frac{(\mathsf{X}_j\sin\vartheta + \mathsf{Y}_i\cos\vartheta)^2}{B^2},
    \\
    \mathsf{X}_j \coloneqq 2 \cdot \frac{j-1}{68} -1,
    \\
    \mathsf{Y}_i \coloneqq -2 \cdot \frac{i-1}{68} +1.
\end{gather*}
$(\mathsf{X}_j, \mathsf{Y}_i)$ is the coordinate on the $69 \times 69$ even grid on $[-1,1] \times [-1,1]$.
$I_0$ determines the overall magnitude of the light distribution, $A$ and $B$ determine the size of the ellipse of the light distribution, and $\vartheta$ determines its rotation.
This model was used in a similar problem of \citet{aragon-calvoSelfsupervisedLearningPhysicsaware2020}, where they only handle artificial images.
See also, e.g., \citet{erwinImfitFastFlexible2015}, for an extensive list of such light distribution models of galaxies.

\paragraph{Latent variables}
$\bm{z}_\P\in\mathbb{R}^4$ contains the information of intensity, semi-major and semi-minor axes, and rotation, as mentioned above.
We used $\bm{z}_\A\in\mathbb{R}^2$ in the full \texttt{NN+phys} and \texttt{NN+phys+reg} models.
Meanwhile, we used $\bm{z}_\A\in\mathbb{R}^6$ (and no $\bm{z}_\P$) in the \texttt{NN-only} model.

\paragraph{Decoder architecture}
There is no nontrivial equation-solving process this time because the physics model $f_\P$ itself gives the closed-form solution.
So the data-generating process in the full \texttt{NN+phys} and \texttt{NN+phys+reg} models is:
\begin{equation*}
    \mathcal{F} [f_\P, f_{\A,\text{Unet}}, f_{\A,\text{tconv}}; \bm{z}_\P, \bm{z}_\A]
    \coloneqq f_{\A,\text{Unet}} \big( f_\P(\bm{z}_\P), f_{\A,\text{tconv}}(\bm{z}_\A) \big).
\end{equation*}
$f_{\A,\text{tconv}}$ is a neural net with transposed convolutional layers and given $\bm{z}_\A$, outputs a signal in $\mathbb{R}^{69 \times 69}$.
$f_{\A,\text{Unet}}$ is a neural net with architecture similar to the U-Net, whose outputs are in $\mathbb{R}^{69 \times 69 \times 3}$.
We used the rectified linear unit (ReLU) as activation function and applied batch normalization before each activation function.
In the \texttt{NN-only} model, we modeled $f_\A(\bm{z}_\A)$ only with a neural net with transposed convolutional layers whose output is in $\mathbb{R}^{69 \times 69 \times 3}$.

Note that we do not consider the \texttt{NN+solver} type of baseline as there appear no nontrivial solvers.

\paragraph{Encoder architecture}
The architecture of $g_{\P,2}$ and $g_\A$ is similar to the one in \citet{aragon-calvoSelfsupervisedLearningPhysicsaware2020}.
We put the softplus function after the final output of $g_\P$ to make its output positive-valued.
$g_{\P,1}$ is simply $g_{\P,1}(\bm{x}) \coloneqq \sum_{i=1}^3 c_i[\bm{x}]_i$, where $[\bm{x}]_i$ denotes the $i$-th channel of $\bm{x}$, and $c$'s are trainable parameters.

\paragraph{Replacement functions}
To create the reduced model, we replaced $f_{\A,\text{Unet}}$ by $h_\A$ such that $h_\A(\bm\nu) \coloneqq [\bm\nu; \bm\nu; \bm\nu]\in\mathbb{R}^{69 \times 69 \times 3}$ (i.e., the $\operatorname{repeat}$ operator along the channel axis).

\paragraph{Hyperparameters}
We selected the hyperparameter $\alpha$ from $\alpha \in \{ 10^{-2}, 10^{-1}, 1 \}$.
This range was chosen to roughly adjust the value of the corresponding regularizer to that of the ELBO.
The others were fixed to be $\beta=1$ and $\gamma=10^3$; these values were also determined by roughly adjusting the order of the values of objectives.
In computing $R_{\text{DA},2}$, we sampled from the uniform distributions on $I^*_0 \in [0.5, 1]$, $A^* \in [0.1, 1.0]$, $e^* \in [0.2, 0.8]$, and $\vartheta^* \in [0, 3.142]$, where $B=A(1-e)$.



\subsection{Human gait}

\paragraph{Physics model}
We modeled $f_\P$ with a trainable Hamilton's equation as in \citep{tothHamiltonianGenerativeNetworks2020,greydanusHamiltonianNeuralNetworks2019}:
\begin{equation*}
    f_\P \left( \begin{bmatrix} \bm{p}^\tr & \bm{q}^\tr \end{bmatrix}^\tr, \bm{z}_\P \right) = \begin{bmatrix} -\frac{\partial \mathcal{H}}{\partial \bm{q}}^\tr & \frac{\partial \mathcal{H}}{\partial \bm{p}}^\tr \end{bmatrix}^\tr,
\end{equation*}
where $\bm{p} \in \mathbb{R}^{d_\mathrm{H}}$ is a generalized position, $\bm{q} \in \mathbb{R}^{d_\mathrm{H}}$ is a generalized momentum, and $\mathcal{H} \colon \mathbb{R}^{d_\mathrm{H}} \times \mathbb{R}^{d_\mathrm{H}} \to \mathbb{R}$ is a Hamiltonian.
We let $d_\mathrm{H}=3$ and modeled $\mathcal{H}$ with an MLP with two hidden layers of size.


\paragraph{Latent variables}
$\bm{z}_\P\in\mathbb{R}^{2d_\mathrm{H}}$ is used as the initial condition of $\bm{p}$ and $\bm{q}$.
There was also $\bm{z}_\A\in\mathbb{R}^{15}$.

\paragraph{Decoder architecture}
In the full \texttt{NN+phys} and \texttt{NN+phys+reg} models, the decoding process contains a numerical solver of ODE $f_\P=0$ with the Euler method.
Its output is then transformed by $f_\A$, an MLP with two hidden layers of size 512.

\paragraph{Encoder architecture}
$g_\P$ and $g_\A$ are MLPs with five hidden layers of size 512, 512, 512, 64, 32.

\paragraph{Replacement functions}
To create the reduced model, we replaced $f_\A$ by an affine map $h_\A$, where $h_\A$ is applied to each snapshot of a sequence independently.

\paragraph{Hyperparameters}
We selected the hyperparameter $\alpha$ from $\alpha \in \{ 10^{-3}, 10^{-2}, 10^{-1}, 1 \}$.
This range was chosen to roughly adjust the value of the corresponding regularizer to that of the ELBO.
The other hyperparameters were just $\gamma=\beta=0$ as we did not use the corresponding regularizers.

\section{Additional experimental results}
\label{result}

We present additional experimental results including investigation of the sensitivity of hyperparameter values and some observation on training runtime.


\subsection{Forced damped pendulum}

\begin{figure*}[b]
    \centering
    \pgfplotsset{height=4cm,width=5cm}
    \begin{minipage}[t]{\linewidth}
        \centering
        \begin{tikzpicture}
            \begin{axis}[compat=newest,
                xmode=log,
                label style={font=\scriptsize}, ylabel={MAE of reconstruction}, 
                enlarge x limits=false,
                xmin=0.001, xmax=0.1, ymin=0.2, ymax=1.7,
                xtick pos=left, ytick pos=left,
                legend entries={NN-only, phys-only, NN+solver, NN+phys, NN+phys+reg},
                legend style={at={(0.98,0.8)}, anchor=north east, nodes={scale=0.65, transform shape}, inner sep=1pt},
                legend columns=1,
                legend image code/.code={
                    \draw[mark repeat=3,mark phase=2]
                    plot coordinates {
                        (0cm,0cm)
                        (0.25cm,0cm)        
                        (0.5cm,0cm)         
                    };%
                },
                ]
                \addplot [myorange, dash dot, thick] coordinates {(0.001,0.4380) (0.1,0.4380)}; 
                \addplot [myyellow, dashed, thick] coordinates {(0.001,1.548) (0.1,1.548)}; 
                \addplot [mygreen, densely dotted, very thick] coordinates {(0.001,0.4388) (0.1,0.4388)}; 
                \addplot [myblue, dash dot, very thick] coordinates {(0.001,0.3701) (0.1,0.3701)}; 
                \addplot [myred, mark=o, error bars/.cd, y dir=both, y explicit] table [x=alpha, y=recerr_avg, y error=recerr_std] {pendulum/hyperparam_alpha.txt};
            \end{axis}
        \end{tikzpicture}
        \begin{tikzpicture}
            \begin{axis}[compat=newest,
                xmode=log,
                label style={font=\scriptsize}, 
                enlarge x limits=false,
                xmin=0.0001, xmax=0.01, ymin=0.2, ymax=1.7,
                xtick pos=left, ytick pos=left,
                ]
                \addplot [myyellow, dashed, thick] coordinates {(0.0001,1.548) (0.01,1.548)}; 
                \addplot [myorange, dash dot, thick] coordinates {(0.0001,0.4380) (0.01,0.4380)}; 
                \addplot [mygreen, densely dotted, very thick] coordinates {(0.0001,0.4388) (0.01,0.4388)}; 
                \addplot [myblue, dash dot, very thick] coordinates {(0.0001,0.3701) (0.01,0.3701)}; 
                \addplot [myred, mark=o, error bars/.cd, y dir=both, y explicit] table [x=gamma, y=recerr_avg, y error=recerr_std] {pendulum/hyperparam_gamma.txt};
            \end{axis}
        \end{tikzpicture}
        \begin{tikzpicture}
            \begin{axis}[compat=newest,
                xmode=log,
                label style={font=\scriptsize}, 
                enlarge x limits=false,
                xmin=0.01, xmax=1.0, ymin=0.2, ymax=1.7,
                xtick pos=left, ytick pos=left,
                ]
                \addplot [myyellow, dashed, thick] coordinates {(0.01,1.548) (1.0,1.548)}; 
                \addplot [myorange, dash dot, thick] coordinates {(0.01,0.4380) (1.0,0.4380)}; 
                \addplot [mygreen, densely dotted, very thick] coordinates {(0.01,0.4388) (1.0,0.4388)}; 
                \addplot [myblue, dash dot, very thick] coordinates {(0.01,0.3701) (1.0,0.3701)}; 
                \addplot [myred, mark=o, error bars/.cd, y dir=both, y explicit] table [x=beta, y=recerr_avg, y error=recerr_std] {pendulum/hyperparam_beta.txt};
            \end{axis}
        \end{tikzpicture}
    \end{minipage}
    \\
    \begin{minipage}[t]{\linewidth}
        \centering
        \begin{tikzpicture}
            \begin{axis}[compat=newest,
                xmode=log,
                label style={font=\scriptsize}, xlabel={\textcolor{white}{$\beta$}$\alpha$\textcolor{white}{$\beta$}}, ylabel={MAE of inferred $\omega$},
                enlarge x limits=false,
                xmin=0.001, xmax=0.1, ymin=0.05, ymax=1.15,
                xtick pos=left, ytick pos=left,
                ]
                \addplot [myyellow, dashed, thick] coordinates {(0.001,0.2315) (0.1,0.2315)}; 
                \addplot [myblue, dash dot, very thick] coordinates {(0.001,1.036) (0.1,1.036)}; 
                \addplot [myred, mark=o, error bars/.cd, y dir=both, y explicit] table [x=alpha, y=esterr_avg, y error=esterr_std] {pendulum/hyperparam_alpha.txt};
            \end{axis}
        \end{tikzpicture}
        \begin{tikzpicture}
            \begin{axis}[compat=newest,
                xmode=log,
                label style={font=\scriptsize}, xlabel={\textcolor{white}{$\beta$}$\beta$\textcolor{white}{$\beta$}},
                enlarge x limits=false,
                xmin=0.0001, xmax=0.01, ymin=0.05, ymax=1.15,
                xtick pos=left, ytick pos=left,
                ]
                \addplot [myyellow, dashed, thick] coordinates {(0.0001,0.2315) (0.01,0.2315)}; 
                \addplot [myblue, dash dot, very thick] coordinates {(0.0001,1.036) (0.01,1.036)}; 
                \addplot [myred, mark=o, error bars/.cd, y dir=both, y explicit] table [x=gamma, y=esterr_avg, y error=esterr_std] {pendulum/hyperparam_gamma.txt};
            \end{axis}
        \end{tikzpicture}
        \begin{tikzpicture}
            \begin{axis}[compat=newest,
                xmode=log,
                label style={font=\scriptsize}, xlabel={\textcolor{white}{$\beta$}$\gamma$\textcolor{white}{$\beta$}},
                enlarge x limits=false,
                xmin=0.01, xmax=1.0, ymin=0.05, ymax=1.15,
                xtick pos=left, ytick pos=left,
                ]
                \addplot [myyellow, dashed, thick] coordinates {(0.01,0.2315) (1.0,0.2315)}; 
                \addplot [myblue, dash dot, very thick] coordinates {(0.01,1.036) (1.0,1.036)}; 
                \addplot [myred, mark=o, error bars/.cd, y dir=both, y explicit] table [x=beta, y=esterr_avg, y error=esterr_std] {pendulum/hyperparam_beta.txt};
            \end{axis}
        \end{tikzpicture}
    \end{minipage}
    \vspace*{-4ex}
    \caption{Performances on the pendulum data with one of the hyperparameters ($\alpha$, $\beta$, or $\gamma$) varied around the nominal value, while the others maintained. Averages and SDs over five random trials are reported. Reference values are shown in dashed or dotted lines.}
    \label{fig:pendulum_hp}
\end{figure*}
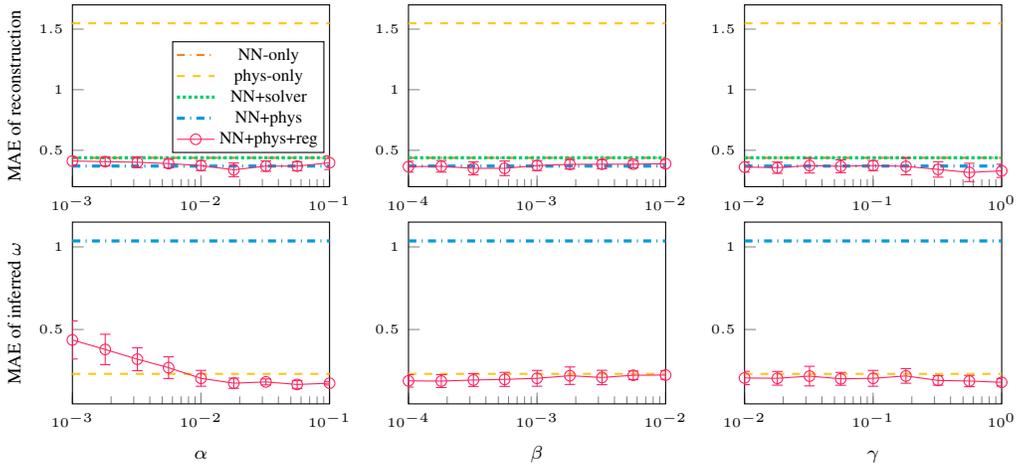

\paragraph{Hyperparameter sensitivity}
We investigated the sensitivity of the performance with regard to the hyperparameters, i.e., the regularization coefficients, $\alpha$, $\beta$, and $\gamma$.
We varied them around the nominal values, i.e., the setting with which the results were reported in the main text ($\alpha=10^{-2}$, $\beta=10^{-3}$, and $\gamma=10^{-1}$; see also Appendix~\ref{setting}).
Figure~\ref{fig:pendulum_hp} summarizes the result.
We can consistently observe the tendency that 1) \texttt{NN+phys+reg} is far better than \texttt{phys-only} in terms of the reconstruction error (upper row); and that 2) \texttt{NN+phys+reg} is far better than \texttt{NN+phys} in terms of the estimation error of physics parameter $\omega$ (lower row).

\paragraph{Achieved hyperparameter values}
We examined the values of the regularizers for data augmentation.
After training, $R_{\text{DA},1} \approx 0.5$ and $R_{\text{DA},2} \approx 2 \times 10^{-3}$ whereas $\Vert x \Vert_2^2 \approx 16$ on average.
This result implies that the functionality of $g_{\P,1}$ and $g_{\P,2}$ are well controlled as intended.

\paragraph{Training runtime}
In training, the \texttt{NN-only} model took about 5.13 seconds for 10 epochs, and the \texttt{NN+phys+reg} took about 10.9 seconds for 10 epochs, though we believe our implementation can still be improved for more efficiency.
The difference probably stems from the physics-part encoder.

\paragraph{More examples of reconstruction and extrapolation}
In the main text, we have shown only one example case of the reconstruction and extrapolation.
In Figure~\ref{fig:pendulum_expl_more}, we provide more examples on different test samples to facilitate further understanding of the result.

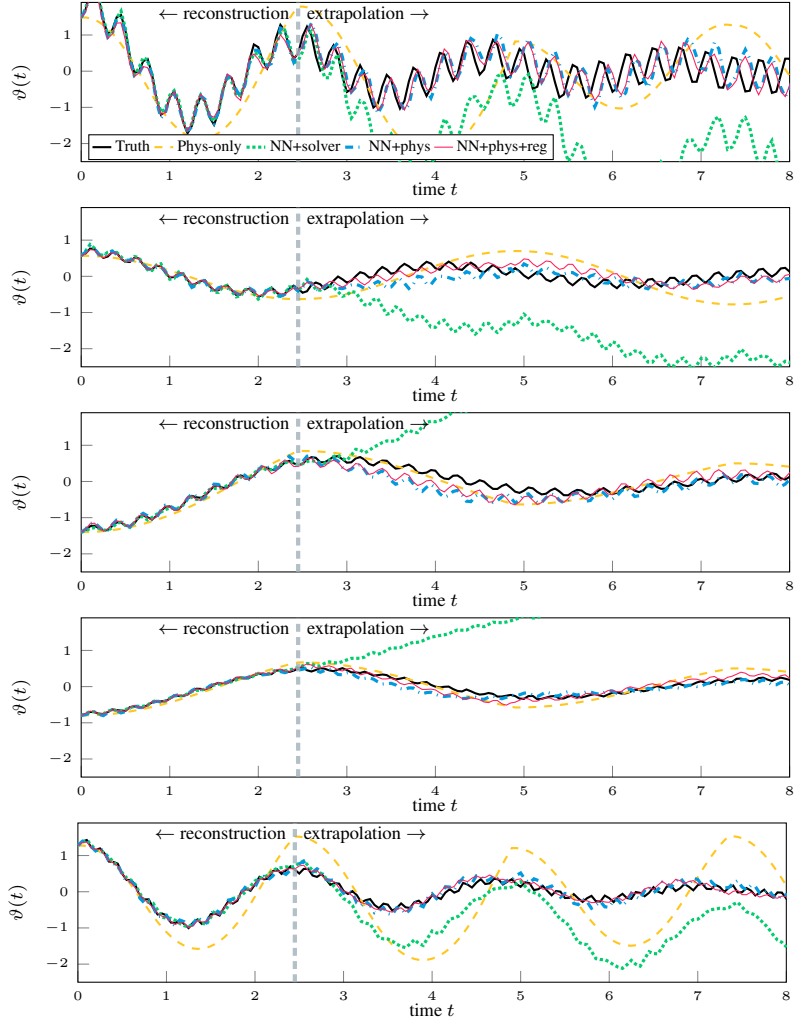
\begin{figure}
    \centering
    \pgfplotsset{height=3.7cm,width=11cm}
\begin{tikzpicture}
    \begin{axis}[compat=newest,
        label style={font=\scriptsize}, xlabel={time $t$}, ylabel={$\vartheta(t)$},
        enlarge x limits=false,
        xmin=0, xmax=8, ymin=-2.5, ymax=1.9,
        xtick pos=left, ytick pos=left,
        legend entries={Truth, Phys-only, NN+solver, NN+phys, NN+phys+reg},
        legend style={at={(0.01,0.01)}, anchor=south west, nodes={scale=0.6, transform shape}, inner sep=1pt},
        legend columns=5,
        legend image code/.code={
            \draw[] 
            plot coordinates {
                (0cm,0cm)
                (0.12cm,0cm)        
                (0.25cm,0cm)         
            };%
        },
        xlabel style={at={(0.5,-1ex)}},
        ]
        \addplot [black, thick] table [x=t, y=data] {pendulum/extrapolation_idx32.txt};
        \addplot [myyellow, dashed, thick] table [x=t, y=noaux] {pendulum/extrapolation_idx32.txt};
        \addplot [mygreen, densely dotted, very thick] table [x=t, y=nophy1] {pendulum/extrapolation_idx32.txt};
        \addplot [myblue, dash dot, very thick] table [x=t, y=phy_noreg] {pendulum/extrapolation_idx32.txt};
        \addplot [myred] table [x=t, y=phy_reg] {pendulum/extrapolation_idx32.txt};
        \addplot [mygray, ultra thick, densely dashed, opacity=0.8, on layer=background] coordinates {(2.45, -2.5) (2.45, 1.9)};
        \node[inner sep=0,outer sep=0,anchor=north west] at (2.55,1.8) {{\scriptsize extrapolation $\rightarrow$}};
        \node[inner sep=0,outer sep=0,anchor=north east] at (2.35,1.8) {{\scriptsize $\leftarrow$ reconstruction}};
    \end{axis}
\end{tikzpicture}
\begin{tikzpicture}
    \begin{axis}[compat=newest,
        label style={font=\scriptsize}, xlabel={time $t$}, ylabel={$\vartheta(t)$},
        enlarge x limits=false,
        xmin=0, xmax=8, ymin=-2.5, ymax=1.9,
        xtick pos=left, ytick pos=left,
        xlabel style={at={(0.5,-1ex)}},
        ]
        \addplot [black, thick] table [x=t, y=data] {pendulum/extrapolation_idx48.txt};
        \addplot [myyellow, dashed, thick] table [x=t, y=noaux] {pendulum/extrapolation_idx48.txt};
        \addplot [mygreen, densely dotted, very thick] table [x=t, y=nophy1] {pendulum/extrapolation_idx48.txt};
        \addplot [myblue, dash dot, very thick] table [x=t, y=phy_noreg] {pendulum/extrapolation_idx48.txt};
        \addplot [myred] table [x=t, y=phy_reg] {pendulum/extrapolation_idx48.txt};
        \addplot [mygray, ultra thick, densely dashed, opacity=0.8, on layer=background] coordinates {(2.45, -2.5) (2.45, 1.9)};
        \node[inner sep=0,outer sep=0,anchor=north west] at (2.55,1.8) {{\scriptsize extrapolation $\rightarrow$}};
        \node[inner sep=0,outer sep=0,anchor=north east] at (2.35,1.8) {{\scriptsize $\leftarrow$ reconstruction}};
    \end{axis}
\end{tikzpicture}
\begin{tikzpicture}
    \begin{axis}[compat=newest,
        label style={font=\scriptsize}, xlabel={time $t$}, ylabel={$\vartheta(t)$},
        enlarge x limits=false,
        xmin=0, xmax=8, ymin=-2.5, ymax=1.9,
        xtick pos=left, ytick pos=left,
        xlabel style={at={(0.5,-1ex)}},
        ]
        \addplot [black, thick] table [x=t, y=data] {pendulum/extrapolation_idx86.txt};
        \addplot [myyellow, dashed, thick] table [x=t, y=noaux] {pendulum/extrapolation_idx86.txt};
        \addplot [mygreen, densely dotted, very thick] table [x=t, y=nophy1] {pendulum/extrapolation_idx86.txt};
        \addplot [myblue, dash dot, very thick] table [x=t, y=phy_noreg] {pendulum/extrapolation_idx86.txt};
        \addplot [myred] table [x=t, y=phy_reg] {pendulum/extrapolation_idx86.txt};
        \addplot [mygray, ultra thick, densely dashed, opacity=0.8, on layer=background] coordinates {(2.45, -2.5) (2.45, 1.9)};
        \node[inner sep=0,outer sep=0,anchor=north west] at (2.55,1.8) {{\scriptsize extrapolation $\rightarrow$}};
        \node[inner sep=0,outer sep=0,anchor=north east] at (2.35,1.8) {{\scriptsize $\leftarrow$ reconstruction}};
    \end{axis}
\end{tikzpicture}
\begin{tikzpicture}
    \begin{axis}[compat=newest,
        label style={font=\scriptsize}, xlabel={time $t$}, ylabel={$\vartheta(t)$},
        enlarge x limits=false,
        xmin=0, xmax=8, ymin=-2.5, ymax=1.9,
        xtick pos=left, ytick pos=left,
        xlabel style={at={(0.5,-1ex)}},
        ]
        \addplot [black, thick] table [x=t, y=data] {pendulum/extrapolation_idx50.txt};
        \addplot [myyellow, dashed, thick] table [x=t, y=noaux] {pendulum/extrapolation_idx50.txt};
        \addplot [mygreen, densely dotted, very thick] table [x=t, y=nophy1] {pendulum/extrapolation_idx50.txt};
        \addplot [myblue, dash dot, very thick] table [x=t, y=phy_noreg] {pendulum/extrapolation_idx50.txt};
        \addplot [myred] table [x=t, y=phy_reg] {pendulum/extrapolation_idx50.txt};
        \addplot [mygray, ultra thick, densely dashed, opacity=0.8, on layer=background] coordinates {(2.45, -2.5) (2.45, 1.9)};
        \node[inner sep=0,outer sep=0,anchor=north west] at (2.55,1.8) {{\scriptsize extrapolation $\rightarrow$}};
        \node[inner sep=0,outer sep=0,anchor=north east] at (2.35,1.8) {{\scriptsize $\leftarrow$ reconstruction}};
    \end{axis}
\end{tikzpicture}
\begin{tikzpicture}
    \begin{axis}[compat=newest,
        label style={font=\scriptsize}, xlabel={time $t$}, ylabel={$\vartheta(t)$},
        enlarge x limits=false,
        xmin=0, xmax=8, ymin=-2.5, ymax=1.9,
        xtick pos=left, ytick pos=left,
        xlabel style={at={(0.5,-1ex)}},
        ]
        \addplot [black, thick] table [x=t, y=data] {pendulum/extrapolation_idx66.txt};
        \addplot [myyellow, dashed, thick] table [x=t, y=noaux] {pendulum/extrapolation_idx66.txt};
        \addplot [mygreen, densely dotted, very thick] table [x=t, y=nophy1] {pendulum/extrapolation_idx66.txt};
        \addplot [myblue, dash dot, very thick] table [x=t, y=phy_noreg] {pendulum/extrapolation_idx66.txt};
        \addplot [myred] table [x=t, y=phy_reg] {pendulum/extrapolation_idx66.txt};
        \addplot [mygray, ultra thick, densely dashed, opacity=0.8, on layer=background] coordinates {(2.45, -2.5) (2.45, 1.9)};
        \node[inner sep=0,outer sep=0,anchor=north west] at (2.55,1.8) {{\scriptsize extrapolation $\rightarrow$}};
        \node[inner sep=0,outer sep=0,anchor=north east] at (2.35,1.8) {{\scriptsize $\leftarrow$ reconstruction}};
    \end{axis}
\end{tikzpicture}
    \caption{Reconstruction and extrapolation of five test samples of the pendulum data. Range $0 \leq t <2.5$ is reconstruction, whereas $t\geq2.5$ is extrapolation. The bottom corresponds to the example presented in the main text.}
    \label{fig:pendulum_expl_more}
\end{figure}


\subsection{Advection-diffusion system}

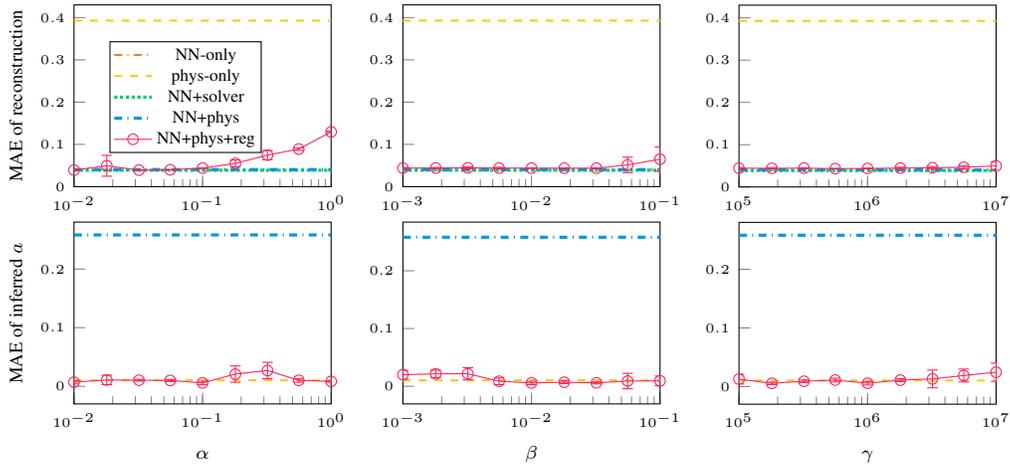
\begin{figure*}[b]
    \centering
    \pgfplotsset{height=4cm,width=5cm}
    \begin{minipage}[t]{\linewidth}
        \centering
        \begin{tikzpicture}
            \begin{axis}[compat=newest,
                xmode=log,
                label style={font=\scriptsize}, ylabel={MAE of reconstruction}, 
                enlarge x limits=false,
                xmin=0.01, xmax=1.0, ymin=0.0, ymax=0.43,
                xtick pos=left, ytick pos=left,
                legend entries={NN-only, phys-only, NN+solver, NN+phys, NN+phys+reg},
                legend style={at={(0.73,0.8)}, anchor=north east, nodes={scale=0.65, transform shape}, inner sep=1pt},
                legend columns=1,
                legend image code/.code={
                    \draw[mark repeat=3,mark phase=2]
                    plot coordinates {
                        (0cm,0cm)
                        (0.25cm,0cm)        
                        (0.5cm,0cm)         
                    };%
                },
                ]
                \addplot [myorange, dash dot, thick] coordinates {(0.01,0.03956) (1.0,0.03956)}; 
                \addplot [myyellow, dashed, thick] coordinates {(0.01,0.3928) (1.0,0.3928)}; 
                \addplot [mygreen, densely dotted, very thick] coordinates {(0.01,0.03882) (1.0,0.03882)}; 
                \addplot [myblue, dash dot, very thick] coordinates {(0.01,0.04043) (1.0,0.04043)}; 
                \addplot [myred, mark=o, error bars/.cd, y dir=both, y explicit] table [x=alpha, y=recerr_avg, y error=recerr_std] {advdif/hyperparam_alpha.txt};
            \end{axis}
        \end{tikzpicture}
        \begin{tikzpicture}
            \begin{axis}[compat=newest,
                xmode=log,
                label style={font=\scriptsize}, 
                enlarge x limits=false,
                xmin=0.001, xmax=0.1, ymin=0.0, ymax=0.43,
                xtick pos=left, ytick pos=left,
                ]
                \addplot [myyellow, dashed, thick] coordinates {(0.001,0.3928) (0.1,0.3928)}; 
                \addplot [myorange, dash dot, thick] coordinates {(0.001,0.03956) (0.1,0.03956)}; 
                \addplot [mygreen, densely dotted, very thick] coordinates {(0.001,0.03882) (0.1,0.03882)}; 
                \addplot [myblue, dash dot, very thick] coordinates {(0.001,0.04043) (0.1,0.04043)}; 
                \addplot [myred, mark=o, error bars/.cd, y dir=both, y explicit] table [x=gamma, y=recerr_avg, y error=recerr_std] {advdif/hyperparam_gamma.txt};
            \end{axis}
        \end{tikzpicture}
        \begin{tikzpicture}
            \begin{axis}[compat=newest,
                xmode=log,
                label style={font=\scriptsize}, 
                enlarge x limits=false,
                xmin=100000, xmax=10000000, ymin=0.0, ymax=0.43,
                xtick pos=left, ytick pos=left,
                ]
                \addplot [myyellow, dashed, thick] coordinates {(100000,0.3928) (10000000,0.3928)}; 
                \addplot [myorange, dash dot, thick] coordinates {(100000,0.03956) (10000000,0.03956)}; 
                \addplot [mygreen, densely dotted, very thick] coordinates {(100000,0.03882) (10000000,0.03882)}; 
                \addplot [myblue, dash dot, very thick] coordinates {(100000,0.04043) (10000000,0.04043)}; 
                \addplot [myred, mark=o, error bars/.cd, y dir=both, y explicit] table [x=beta, y=recerr_avg, y error=recerr_std] {advdif/hyperparam_beta.txt};
            \end{axis}
        \end{tikzpicture}
    \end{minipage}
    \\
    \begin{minipage}[t]{\linewidth}
        \centering
        \begin{tikzpicture}
            \begin{axis}[compat=newest,
                xmode=log,
                label style={font=\scriptsize}, xlabel={\textcolor{white}{$\beta$}$\alpha$\textcolor{white}{$\beta$}}, ylabel={MAE of inferred $a$},
                enlarge x limits=false,
                xmin=0.01, xmax=1.0, ymin=-0.03, ymax=0.28,
                xtick pos=left, ytick pos=left,
                ]
                \addplot [myyellow, dashed, thick] coordinates {(0.01,0.01028) (1.0,0.01028)}; 
                \addplot [myblue, dash dot, very thick] coordinates {(0.01,0.2581) (1.0,0.2581)}; 
                \addplot [myred, mark=o, error bars/.cd, y dir=both, y explicit] table [x=alpha, y=esterr_avg, y error=esterr_std] {advdif/hyperparam_alpha.txt};
            \end{axis}
        \end{tikzpicture}
        \begin{tikzpicture}
            \begin{axis}[compat=newest,
                xmode=log,
                label style={font=\scriptsize}, xlabel={\textcolor{white}{$\beta$}$\beta$\textcolor{white}{$\beta$}},
                enlarge x limits=false,
                xmin=0.001, xmax=0.1,
                xtick pos=left, ytick pos=left,
                ]
                \addplot [myyellow, dashed, thick] coordinates {(0.001,0.01028) (0.1,0.01028)}; 
                \addplot [myblue, dash dot, very thick] coordinates {(0.001,0.2581) (0.1,0.2581)}; 
                \addplot [myred, mark=o, error bars/.cd, y dir=both, y explicit] table [x=gamma, y=esterr_avg, y error=esterr_std] {advdif/hyperparam_gamma.txt};
            \end{axis}
        \end{tikzpicture}
        \begin{tikzpicture}
            \begin{axis}[compat=newest,
                xmode=log,
                label style={font=\scriptsize}, xlabel={\textcolor{white}{$\beta$}$\gamma$\textcolor{white}{$\beta$}},
                enlarge x limits=false,
                xmin=100000, xmax=10000000, ymin=-0.03, ymax=0.28,
                xtick pos=left, ytick pos=left,
                ]
                \addplot [myyellow, dashed, thick] coordinates {(100000,0.01028) (10000000,0.01028)}; 
                \addplot [myblue, dash dot, very thick] coordinates {(100000,0.2581) (10000000,0.2581)}; 
                \addplot [myred, mark=o, error bars/.cd, y dir=both, y explicit] table [x=beta, y=esterr_avg, y error=esterr_std] {advdif/hyperparam_beta.txt};
            \end{axis}
        \end{tikzpicture}
    \end{minipage}
    \vspace*{-4ex}
    \caption{Performances on the advection-diffusion data with one of the hyperparameters ($\alpha$, $\beta$, or $\gamma$) varied around the nominal value, while the others maintained. Averages and SDs over five random trials are reported. Reference values are shown in dashed or dotted lines.}
    \label{fig:advdif_hp}
\end{figure*}

\paragraph{Hyperparameter sensitivity}
We investigated the sensitivity of the performance with regard to the hyperparameters $\alpha$, $\beta$, and $\gamma$.
We varied these values around the nominal values, i.e., the setting with which the results were reported in the main text ($\alpha=10^{-1}$, $\beta=10^{-2}$, and $\gamma=10^6$; see also hyperparameter settings in Appendix~\ref{setting}).
Figure~\ref{fig:advdif_hp} summarizes the result.
Across all the coefficient values, we can consistently observe the tendency similar to that in the pendulum data experiment.

\paragraph{Achieved hyperparameter values}
We examined the values of the regularizers for data augmentation.
After training, $R_{\text{DA},1} \approx 0.01$ and $R_{\text{DA},2} \approx 5 \times 10^{-7}$ whereas $\Vert x \Vert_2^2 \approx 458$ on average.
This result implies that the functionality of $g_{\P,1}$ and $g_{\P,2}$ are well controlled as intended.

\paragraph{Training runtime}
In training, the \texttt{NN-only} model took about 6.01 seconds for 10 epochs, and the \texttt{NN+phys+reg} took about 15.4 seconds for 10 epochs.


\subsection{Galaxy images}

\paragraph{Reconstruction}
In Figure~\ref{fig:galaxy_rec}, we show examples of reconstruction of five test samples.
While the \texttt{phys-only} model cannot recover the color information by construction, the other models that include neural nets reproduce the original colors to some extent.
The reconstruction errors over the whole test set are reported in Table~\ref{tab:galaxy_err}.
From these results, we can observe that the reconstruction performance is similar between \texttt{NN-only}, \texttt{NN+phys}, and \texttt{NN+phys+reg}.
Despite the similar reconstruction performance, the \texttt{NN+phys+reg} model achieves clearly better generation performance as shown in the main text.

\begin{table}[t]
    \begin{minipage}[t]{0.52\textwidth}
        
\caption{Performances on test set of the galaxy image data. Averages (and SDs) over the whole test set are reported.}
\label{tab:galaxy_err}
\setlength{\tabcolsep}{5pt}
\centering
\begin{small}\begin{tabular}{lc}
    \toprule
    & MAE of reconstruction \\
    \midrule
    \texttt{NN-only} & $0.0167$\hspace{1em}($3.0 \times 10^{-2}$) \\
    \texttt{Phys-only} & $0.0264$\hspace{1em}($3.9 \times 10^{-2}$) \\
    \texttt{NN+phys(+reg)}, $\alpha=0$ & $0.0188$\hspace{1em}($3.4 \times 10^{-2}$) \\
    \texttt{NN+phys+reg}, $\alpha>0$ & $0.0180$\hspace{1em}($3.3 \times 10^{-2}$) \\
    \bottomrule
\end{tabular}\end{small}


    \end{minipage}
    \hfill
    \begin{minipage}[t]{0.44\textwidth}
        
\centering
\caption{Performances on test set of the gait data. Averages (SDs) over 20 random trials are reported.}
\label{tab:hm_gait_err}
\setlength{\tabcolsep}{3pt}
\begin{small}\begin{tabular}{m{5.8em}M{3.6em}M{5.2em}}
    \toprule
    & \multicolumn{2}{c}{MAE of reconstruction} \\
    \midrule
    \texttt{Phys-only} &
        $0.726$ & ($1.0 \times 10^{-2}$) \\
    \texttt{NN+solver}&
        $0.276$ & ($1.5 \times 10^{-2}$) \\
    \texttt{NN+phys} &
        $0.273$ & ($9.0 \times 10^{-3}$) \\
    \texttt{NN+phys+reg} &
        $0.259$ & ($9.0 \times 10^{-3}$) \\
    \bottomrule
\end{tabular}\end{small}
    \end{minipage}
\end{table}

\paragraph{Counterfactual generation}
In Figure~\ref{fig:galaxy_cf}, we show the result of generation, where we varied the last element of $\bm{z}_\P$ that corresponds to the angle of a galaxy in image, $\vartheta$.
We examined the models trained without or with one of the regularizers, $R_\text{PPC}$ (i.e., $\alpha=0$); the other regularizers were always active.
In Figure~\ref{fig:galaxy_cf}, the case without the regularizer does not show reasonable generation with different $\vartheta$.
Note that $\vartheta < 0$ was never encountered during training as we set the range of the last element of $\bm{z}_\P$ to be non-negative; nevertheless reasonable images are generated with $\vartheta<0$.

\paragraph{Latent variable}
We computed the first two principal scores of $\bm{z}_\A$ and plotted them with the corresponding image sample in Figure~\ref{fig:galaxy_vis}.
In the \texttt{NN-only} model, the distribution of $\bm{z}_\A$ clearly corresponds to the angle of the galaxy in images\footnote{This might be a good property in some applications, but we do not want for it to happen in our \texttt{NN+phys+reg} model because the angle is rather manually encoded in an element of $\bm{z}_\P$, and $\bm{z}_\A$ should carry other information.}.
In contrast in the \texttt{NN+phys+reg} model, such a correspondence is not observed.
This is a reasonable result because in \texttt{NN+phys+reg}, the semantic of galaxy angle is completely assigned to the last element of $\bm{z}_\P$.

\begin{figure}[p]
    \centering
    \input{fig_galaxy_rec}
    \\[5ex]
    \input{fig_galaxy_cf}
    \\[5ex]
    \input{fig_galaxy_vis}
\end{figure}


\subsection{Human gait}

\paragraph{Reconstruction}
The reconstruction errors over the whole test set are reported in Table~\ref{tab:hm_gait_err}.

\section{Extension}
\label{extension}

While the proposed framework is useful as shown in our experiments, there are several directions to go for possible technical improvement of the method.
First, physics-integrated VAEs can be further combined with techniques to solve ODEs and PDEs with neural networks \citep{raissiPhysicsinformedNeuralNetworks2019,yangPhysicsinformedDeepGenerative2018,yangBPINNsBayesianPhysicsinformed2021}.
We supposed the use of differentiable numerical solvers if the model contains ODEs or PDEs, but such numerical solvers are often computationally heavy.
Replacing them with neural net-based solutions will be useful for various applications.
Second, while we defined the regularizer based on the (possibly loose) upper bound of KL divergence, we may use other dissimilarity measure of distributions or random variables, such as maximum mean discrepancy.
Third, the proposed regularization method can be extended to other types of deep generative models; e.g., an extension to InfoVAE \citep{zhaoInfoVAEBalancingLearning2019} is straightforward.
Lastly, neural architecture search in the context of physics-integrated models \citep{baBlendingDiversePhysical2019} would be an interesting topic also in generative modeling.

\end{document}